\newcommand{\ie}{\emph{i.e.}\xspace} %
\newcommand{\wrt}{\emph{w.r.t.\xspace}}
\newcommand{\nop}[1]{}
\def\eqref#1{equation~\ref{#1}}
\def\1{\mathbbm{1}}
\DeclareMathAlphabet{\mathsfit}{\encodingdefault}{\sfdefault}{m}{sl}
\SetMathAlphabet{\mathsfit}{bold}{\encodingdefault}{\sfdefault}{bx}{n}
\DeclareMathOperator*{\argmin}{arg\,min}
\newtheorem{theorem}{Theorem}
\newtheorem{lemma}[theorem]{Lemma}
\newcommand\revision[1]{\textcolor{black}{#1} }
\definecolor{revision}{RGB}{255,255,255}
\newcommand\rebuttal[1]{\textcolor{black}{#1} }
\title{Learning Hyper Label Model \\ for Programmatic Weak Supervision}
\author{Renzhi Wu$^1$, Shen-En Chen$^1$, Jieyu Zhang$^2$, Xu Chu$^1$ \\
$^1$Georgia Tech $^2$University of Washington\\
\texttt{\{renzhiwu@,achen353@,xu.chu@cc.\}gatech.edu} \\
\texttt{jieyuz2@cs.washington.edu}\\
}
\begin{document}

\maketitle

\begin{abstract}
To reduce the human annotation efforts, the programmatic weak supervision (PWS) paradigm abstracts weak supervision sources as labeling functions (LFs) and involves a label model to aggregate the output of multiple LFs to produce training labels.
Most existing label models require a parameter learning step for each dataset. In this work, we present a hyper label model that (once learned) infers the ground-truth labels for each dataset in a single forward pass without dataset-specific parameter learning. 
The hyper label model approximates an optimal analytical (yet computationally  intractable) solution of the ground-truth labels. 
We train the model on synthetic data generated in the way that ensures the model approximates the analytical optimal solution, and build the model upon Graph Neural Network (GNN) to ensure the model prediction being invariant (or equivariant) to the permutation of LFs (or data points). 
On 14 real-world datasets, our hyper label model outperforms the best existing methods in both accuracy (by 1.4 points on average) and efficiency (by six times on average). 
Our code is available at \href{https://github.com/wurenzhi/hyper_label_model}{https://github.com/wurenzhi/hyper\_label\_model}
\end{abstract}

\section{Introduction}
The lack of labeled training data is a major challenge impeding the practical application of machine learning (especially deep learning) techniques. 
Therefore, practitioners have been increasingly turned to \textit{weak supervision} in which large amounts of cheaply generated noisy labels are used. There are many forms of weak supervision sources, e.g. external knowledge bases~\citep{mintz2009distant}, existing pre-trained models~\citep{das2020goggles, wu2022cluster}, and heuristics/rules~\citep{shin2015incremental}.
To unify different sources, the \textit{programmatic weak supervision (PWS)} paradigm~\citep{ratner2016data, ratner2017snorkel,zhang2022survey} was proposed. 
In PWS, the user expresses each available weak supervision signal from different sources with a labeling function (LF), a small program that takes in a data point and outputs a noisy label. 
After that, each LF is applied to unlabeled data of arbitrary size to obtain a noisy label vector;
then, a label aggregation model (also referred as label model in literature) is used to aggregate all noisy label vectors to infer the unknown ground-truth labels. The inferred labels can then be used to train any downstream end models.
The PWS paradigm has been successful in various tasks~\citep{wu2018fonduer, fries2019weakly,lison2020named, Panda, wu2020zeroer, li2021bertifying} and industry scenarios~\citep{mathew2021defraudnet, bach2019snorkel,dunnmon2020cross}. 

The core challenge in PWS is how to aggregate all noisy label vectors to infer the ground-truth labels. Let label matrix $X$ denote the noisy labels where each column $X[:,j]$ denotes the noisy label vector from the $j^{th}$ LF and each row $X[i,:]$ denotes the weak labels of the $i^{th}$ data point; Let $\bm{y}$ denote the ground-truth label vector.  Most existing label models assume an underlying distribution $p(\bm{y}[i]|X[i,:]; \theta)$~\citep{zhang2022survey} where %
$\bm{y}[i]$ is the label for the data point and $\theta$ is the parameter of the distribution. The parameter $\theta$ is first learned on the weak labels $X=(X[1,:], X[2,:], \dots)$ in an unsupervised and typically iterative way, and then inference is made using $p(\bm{y}[i]|X[i,:]; \theta)$. In this approach, the parameter $\theta$ is dataset-specific and has to be learned for every different $X$ (dataset).

In contrast to existing solutions, we propose a hyper label model with the goal of reducing assumptions and parameter learning process.
Specifically, we aim to develop a hyper model that enjoys two desiderata: (1) it works with "minimal" assumption, \ie, we only assume the majority of LFs is better-then-random while does not require the knowledge or assume any particular forms of underlying distribution $p(\bm{y}[i]|X[i,:]; \theta)$; (2) once the hyper model is learned, it can be used to infer $\bm{y}$ for any new $X$ without additional  dataset-specific parameter learning process.
To shed light on this direction, we first show, in theory, that without assuming underlying distribution, there is an optimal and analytical (therefore no parameter learning) way to estimate of $\bm{y}$ based on $X$, \ie, $\bm{y}^*=h^*(X)$.
However, such $h^*$ is intractable to compute since it involves averaging over a set whose size is exponentially-increasing \wrt the size of $X$. Therefore, we propose to leverage the power of deep learning to approximate this solution, \ie, we seek for an alternative function $h$ parametrized by some neural networks, and once learned, it can estimate the label vector for new dataset without ad hoc dataset-specific learning process. Thus, we call the learned model \textit{hyper label model}.   

Materializing this idea involves two key questions: (1) How to generate training data? (2) How to design the model architecture?
To generate training data, the straightforward solution is to use the analytical method to generate many pairs of $(X, \bm{y}^*)$ where $\bm{y}^* = h^*(X)$. 
However, computing $\bm{y}^*$ with $h^*(X)$ is of exponential complexity.
We notice that for each $X$, $h^*(X)$ is an average of the label vectors from a certain set. Taking advantage of this, we are able to avoid directly generating $\bm{y}^*$ that is of exponential complexity and design a way of generating an equivalent set of training data such that the trained model approximates $h^*(X)$. 

The model architecture has two requirements. First, it should be able to accept input matrix $X$ of arbitrary size as the size of the matrix $X$ can be different across datasets. Second, the output of the model (e.g. the predicted label vector) should be invariant to the permutations of columns in $X$ as the order of the LFs should not impact the final predicted labels; The output of the model should be equivariant to the permutation of rows in $X$ as when switching the order of the data points in $X$ the predicted labels should be switched accordingly. 
We noticed that a Graph Neural Network (GNN) is able to accept an input graph of arbitrary size and is permutation equivariant to the nodes on the graph (and can also be made to be permutation invariant by taking the average of the nodes). Therefore, we propose to represent the input matrix $X$ as a graph and then design a GNN to satisfy the two requirements.

\textbf{Contributions.} We make the following contributions:
\begin{enumerate}[leftmargin=*]
\item We for the first time present an analytical method for label aggregation which is optimal in the sense that it minimizes a certain form of the averaged prediction error, though directly using the analytical method is of exponential complexity. 
\item We train a model to learn the analytical method. The trained model is a \textit{hyper label model} that can be used to infer the ground-truth labels for unseen datasets in a single forward pass without needing any dataset-specific parameter learning. 
\item We design a synthetic training data generation method and show that the hyper label model trained on the synthetically generated data learns to be the analytical method.
\item We design an effective model architecture based on GNN so that the hyper label model is applicable to arbitrary number of LF label vectors of arbitrary size and is invariant/equivariant to the permutation of LF label vectors/data points.
\item We show that our hyper label model outperforms the best existing methods over 14 real-world weak supervision datasets in both accuracy (by 1.4 points on average) and efficiency (by a speedup of six times on average) for both unsupervised and semi-supervised label aggregation.
\end{enumerate}

\section{Related Work}
\label{sec:related}
All existing methods (except majority vote) first learn some parameter $\theta$ ad hoc for each new dataset and inference is then performed based on the learned parameter $\theta$. 
The existing methods differentiate from each other in how to formulate the parameter $\theta$ and how to learn the parameter~\citep{zhang2022survey}. For example, most methods assume an underlying distribution $p(\bm{y}[i]|X[i,:]; \theta)$~\citep{ratner2016data,ratner2019training,fu2020fast,SIMPLE, yu2022nplm} and focus on how to represent the distribution and how to learn the parameter $\theta$ of the distribution. Another example is that some approaches treat the accuracy of the LFs as parameters then use iterative methods to learn the accuracy parameters of the LFs~\citep{arachie2021constrained, arachie2021general, dawid1979maximum} for each dataset. Different from all existing methods, our hyper label model directly performs inference on new datasets without needing an ad hoc dataset-specific parameter learning process.

In principle, $X$ could be any matrix in $\{+1, -1, 0\}^{n\times m}$ and $\bm{y}$ can be any vector in $\{+1, -1\}^{n}$. For arbitrary $X$ and $\bm{y}$, there is no way to infer $\bm{y}$ from $X$ with a better performance than random guess. Therefore, all label models implicitly or explicitly make some assumptions about the quality of the LFs. For example, assuming \rebuttal{the accuracy of each LF is in certain range~\citep{ratner2016data}}
or the accuracy of LFs are known or can be estimated in a certain way~\citep{arachie2021constrained, arachie2021general, dawid1979maximum}. On top of this, most existing methods also make additional assumptions about modeling. Specifically, most existing methods assumes a distribution $p(\bm{y}[i]|X[i,:]; \theta)$, then further assumes the distribution taking a certain form (e.g. probabilistic graphical models (PGM)~\citep{ratner2016data, fu2020fast, yu2022nplm}). 
Our method only assumes the majority of the LFs is better than random guess, which is "minimum" comparing to existing methods. 

\rebuttal{
While our work focuses on PWS, there are other methods to reduce annotation cost. One important line of work is self-supervised learning where feature representations are learned from  self-defined pseudolabels and can then be used for downstream tasks~\citep{jaiswal2020survey, misra2020self, liu2021self}. Another popular approach is active learning that interactively selects the most informative data points to annotate~\citep{settles2012active}. 
}

\section{Problem Setup}
Given a binary classification task, let $n$ and $m$ denote the number of data points and the number of LFs respectively. 
Let $X\in\{+1, -1, 0\}^{n\times m}$ denote a label matrix where $X[i,j]\in\{+1, -1, 0\}$ denotes the weak label of the $i^{th}$ data point provided by the $j^{th}$ LF. 
The values $+1$ and $-1$ denote the positive and negative classes respectively and $0$ denotes abstention, \ie an LF does not have enough information to label a data point as either positive or negative~\citep{ratner2016data}.
The goal of a label model is to infer the unknown ground-truth label vector $\bm{y}\in\{+1, -1\}^{n}$ using $X$, which typically requires a learning process for each individual dataset, \ie, $X$. 

\textbf{The Better-than-random Assumption.}  %
As discussed in Section~\ref{sec:related},
in weak supervision literature, there are often assumptions on the quality of LFs so that one can make a meaningful estimation of $\bm{y}$ using $X$.
\rebuttal{
Different methods make different assumptions~\citep{ratner2016data, fu2020fast,ratner2019training, arachie2021constrained}.}
\rebuttal{In this work, we assume that for each class, the majority of LFs are better than random.}
This assumption is realistic since the LFs are typically made by human and humans might occasionally make mistakes when developing individual LFs, resulting in a small portion of worse-than-random LFs. 
Formally, this assumption can be expressed as:
\begin{equation}
\label{eq:our_btr_assumption}
\sum_{j=0}^{m-1} g(X, \bm{y}, j, +1) > \frac{m}{2} \ \text{and} \ \sum_{j=0}^{m-1} g(X, \bm{y}, j, -1) > \frac{m}{2},
\end{equation}
where $g(X, \bm{y},j, c)$ denotes whether the $j^{th}$ LF is better-than-random for class $c$:
\begin{equation}
 \begin{split}
 g(X, \bm{y},j,c) = &
 \begin{cases}
     1, \text{if}\ \sum_{i=0}^{n-1}\mathbf{1}_{X[i,j]=c\ \&\ \bm{y}[i]=c}>\sum_{i=0}^{n-1}\mathbf{1}_{X[i,j]=-c\ \&\ \bm{y}[i]=c}\\
     0, \text{otherwise;}
 \end{cases}\\
 \end{split}
\end{equation}

We define $\sigma(X, \bm{y})=1$ when Equation~\ref{eq:our_btr_assumption} is satisfied and $\sigma(X, \bm{y})=0$ otherwise. We say a pair $(X, \bm{y})$ is \textit{valid} (or a vector $\bm{y}$ is \textit{valid} for a given $X$)
when $\sigma(X, \bm{y})=1$. Intuitively, $\sigma$ constrains the space of the predicted label vector $\hat{\bm{y}}$ and we would only predict one of label vectors with $\sigma(X, \hat{\bm{y}})=1$ for a label matrix $X$.
Note the method we will propose
is not tied to our better-than-random assumption, and it also works with any other assumptions to define $\sigma$.

\paragraph{Our Goal.} 
Most existing methods aim to learn an individual label model for each dataset. In contrast, our goal is to learn a hyper label model under our better-than-random assumption. The learned hyper model can be applied to any unseen dataset and produces a prediction of the label vector $\bm{y}$ in a single forward pass without any form of dataset-specific parameter learning.
Specifically, exisiting methods typically model the distribution $p(\bm{y}[i]|X[i,:]; \theta)$, where $X[i,:]$ is an individual data point from a dataset, \ie one row in the label matrix $X$ (that represents all data points)
and $\bm{y}[i]$ is the corresponding label; The parameter $\theta$ should be learned for every new dataset before performing inference on each data point using the distribution $p(\bm{y}[i]|X[i,:]; \theta)$. Instead, we aim to learn a hyper distribution $p(\bm{y}|X, \Theta)$ over all possible datasets with a hyper label model. Once the  hyper label model has learned $\Theta$, for any new dataset $X_{\text{new}}$, it could directly produce prediction using the distribution $p(\bm{y}|X_{\text{new}}, \Theta)$ without needing to learn a dataset-specific parameter $\theta$.

\section{An Analytical Optimal Solution}
We first show there is an optimal and analytical (therefore no dataset-specific parameter learning) method to estimate $\bm{y}$ based on $X$, \ie, $\bm{y}^*=h^*(X)$. This makes a hyper label model possible.

For each label matrix $X$, let $\mathcal{U}_{\bm{y}}(X)=\{\bm{y}|\sigma(X,\bm{y})=1\}$ denote the set of valid candidate $\bm{y}$s for $X$. 
The expected error of an estimator $h(X)$ of the $\bm{y}$ on each $X$ is:
\begin{equation}
\small
\label{eq:expect_error}
\epsilon(X, h) = \sum_{\bm{y}\in \mathcal{U}_{\bm{y}}(X)}p(\bm{y}|X)||\bm{y}-h(X)||
\end{equation}
where $p(\bm{y}|X)$ is a distribution of $\bm{y}$ defined on set $\mathcal{U}_{\bm{y}}(X)$ and $||\cdot||$ denotes L2 loss
(\ie squared error).
$p(\bm{y}|X)$ is unknown and can be different in different real-world applications (datasets).
Without additional information apart from $X$, there is no way to determine the preference of some valid choices of $\bm{y}$ over other valid choices of $\bm{y}$, so the uniform distribution (\ie $p'(\bm{y}|X)=\frac{1}{|\mathcal{U}_{\bm{y}}(X)|}$) is intuitively \rebuttal{a good approximation} for the unknown $p(\bm{y}|X)$. In fact, using the uniform distribution has optimalities in both the worst case and the average case.
To maintain the flow of the paper, we defer the formal definition and proof of the optimalities of using the uniform distribution to Appendix~\ref{app:uni_min_KL}. 
Replacing $p(\bm{y}|X)$ by the uniform distribution, Equation~\ref{eq:expect_error} becomes:
\begin{equation}
\epsilon'(X, h) = \frac{1}{|\mathcal{U}_{\bm{y}}(X)|}\sum_{\bm{y}\in \mathcal{U}_{\bm{y}}(X)}||\bm{y}-h(X)||
\end{equation}
$\epsilon'(X,h)$ can be interpreted as the average error of all possible outcomes. 
An estimator $h$ can be said to be the optimal if it minimizes the error $\epsilon'(X, h)$, $\forall X$. 
\begin{theorem}
$\forall X$, $h^*(X) = \frac{1}{|\mathcal{U}_{\bm{y}}(X)|}\sum_{\bm{y}\in \mathcal{U}_{\bm{y}}(X)} \bm{y}$ 
\rebuttal{is an optimal estimator of the ground-truth in the sense that it minimizes $\epsilon'(X, h)$.}
\end{theorem}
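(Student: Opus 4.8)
The plan is to exploit the fact that the minimization of $\epsilon'(X,h)$ is \emph{pointwise} in $X$: the quantity $\epsilon'(X,h)$ depends on the estimator $h$ only through the single vector $h(X)\in\R^n$, and nothing couples the values of $h$ at different label matrices. Hence it suffices to fix an arbitrary $X$ (with $\mathcal{U}_{\bm{y}}(X)\neq\emptyset$, which holds for any $X$ realizable under the better-than-random assumption) and to minimize, over a free vector $\vz\in\R^n$, the function
\[
F(\vz)\;=\;\frac{1}{|\mathcal{U}_{\bm{y}}(X)|}\sum_{\bm{y}\in \mathcal{U}_{\bm{y}}(X)}\|\bm{y}-\vz\|,
\]
where $\|\cdot\|$ is the squared Euclidean norm as in the paper, and where we emphasize that $\vz$ is allowed to range over all of $\R^n$, not only over $\{+1,-1\}^n$. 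Writing $N=|\mathcal{U}_{\bm{y}}(X)|$ and $\bar{\bm{y}}=\frac1N\sum_{\bm{y}\in\mathcal{U}_{\bm{y}}(X)}\bm{y}=h^*(X)$, the claim reduces to showing that $F$ is minimized precisely at $\vz=\bar{\bm{y}}$.

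The second step is the standard bias--variance-type decomposition. For each $\bm{y}$ I would write $\bm{y}-\vz=(\bm{y}-\bar{\bm{y}})+(\bar{\bm{y}}-\vz)$ and expand the squared norm as $\|\bm{y}-\bar{\bm{y}}\|+2\langle\bm{y}-\bar{\bm{y}},\,\bar{\bm{y}}-\vz\rangle+\|\bar{\bm{y}}-\vz\|$. Summing over $\bm{y}\in\mathcal{U}_{\bm{y}}(X)$, the cross term becomes $2\big\langle \sum_{\bm{y}}(\bm{y}-\bar{\bm{y}}),\,\bar{\bm{y}}-\vz\big\rangle$, and $\sum_{\bm{y}\in\mathcal{U}_{\bm{y}}(X)}(\bm{y}-\bar{\bm{y}})=\vzero$ by the definition of $\bar{\bm{y}}$, so this term vanishes. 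This yields
\[
F(\vz)\;=\;\frac{1}{N}\sum_{\bm{y}\in \mathcal{U}_{\bm{y}}(X)}\|\bm{y}-\bar{\bm{y}}\|\;+\;\|\bar{\bm{y}}-\vz\|,
\]
in which the first term does not depend on $\vz$ and the second is nonnegative and equals $0$ if and only if $\vz=\bar{\bm{y}}$. Therefore $F(\vz)\ge F(\bar{\bm{y}})$ for all $\vz$, with equality iff $\vz=\bar{\bm{y}}$, i.e.\ $h^*(X)=\bar{\bm{y}}$ is the unique minimizer of $F$. Since $X$ was arbitrary, $h^*$ minimizes $\epsilon'(X,h)$ simultaneously for every $X$, which is the assertion of the theorem.

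As an alternative closing argument I would note that $F$ is a strictly convex quadratic in $\vz$, so $\nabla F(\vz)=\tfrac2N\sum_{\bm{y}}(\vz-\bm{y})=\vzero$ forces $\vz=\bar{\bm{y}}$ directly; the decomposition above is preferable mainly because it simultaneously exhibits uniqueness and the residual optimal value $\tfrac1N\sum_{\bm{y}}\|\bm{y}-\bar{\bm{y}}\|$ (the ``irreducible'' error). I do not anticipate a genuine obstacle in this proof --- it is the classical ``mean minimizes mean squared deviation'' fact --- so the only points that deserve care are (i) making explicit that the optimization over $h$ decouples across label matrices and that $h(X)$ is unconstrained in $\R^n$, and (ii) handling the degenerate case $\mathcal{U}_{\bm{y}}(X)=\emptyset$, which we exclude since then $\epsilon'(X,h)$ is not even defined.
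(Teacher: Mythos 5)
Your proof is correct and is precisely the argument the paper has in mind: the paper omits the proof, remarking only that ``the mean minimizes mean squared error,'' and your bias--variance decomposition (with the vanishing cross term) is the standard way to establish exactly that fact, together with the useful observation that the optimization decouples pointwise in $X$.
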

We omit the proof as it is straightforward (The mean minimizes mean squared error.). Theorem 1 makes sense intuitively: since $X$ is the only information we have, $\bm{y}$ can be any element in $\mathcal{U}_{\bm{y}}(X)$ and there is no information to support preferences of some elements over other elements in $\mathcal{U}_{\bm{y}}(X)$, so the best prediction one can make is the average of all elements in $\mathcal{U}_{\bm{y}}(X)$. 

\section{Learning the Hyper Label Model}

Although we have the analytical form of the optimal estimator $h^*$, computing it is of exponential complexity as $\mathcal{U}_{\bm{y}}(X)$ is exponentially large for any $X$. 
Therefore, we propose to train a neural network model $h$ to approximate the optimal estimator $h^*$. 
The trained model is a \textit{hyper label model} that can infer the labels for a new dataset in a single forward pass.
To materialize this idea, we need to answer the following questions: (1) What training data to use? 
(2) What model architecture to use? 
We discuss all these in the following sections.

\subsection{Training Data Generation}
\label{sec:train_data_gen}
Given a training set $\mathcal{D}=\{(X_1, \bm{y}_1), \dots\}$ and cross-entropy loss $\ell_{\text{CE}}(\cdot, \cdot)$, our learning objective is:
\begin{equation}
   \argmin_{h} \mathcal{L}(h, \mathcal{D}) = \argmin_{h} \sum_{i=1}^{|\mathcal{D}|}\sum_{j=1}^{n} \ell_{\text{CE}}(h(X_i)[j], \bm{y}_i[j]),
\end{equation}
where we use notation $[j]$ to index the $j$th item of the preceding vector.
The key challenge is how to obtain the training dataset $\mathcal{D}$.
Naively, we could generate a random $X$ and then use the analytical method to find $\bm{y}^*$, which is however computationally intractable. 
Therefore,
we design an efficient data generation method that ensures the model trained on our generated data still approximates $h^*$. 
By the following theorem, we show that given a $X$, uniformly sampling a valid $\bm{y}$, 
\ie, $\bm{y} \in \mathcal{U}_{\bm{y}}(X)$,
to compose the training dataset $\mathcal{D}$ ensures the learned hyper label model is asymptotically close to the analytical solution.
\begin{theorem}
$\forall X \in \mathcal{D}$, if the corresponding $\bm{y}$ is uniformly sampled and valid,  when $|\mathcal{D}|\to +\infty$, then $\argmin_{h} \mathcal{L}(h, \mathcal{D}) \to h^*(X) = \frac{1}{|\mathcal{U}_{\bm{y}}(X)|}\sum_{\bm{y}\in \mathcal{U}_{\bm{y}}(X)} \bm{y}$.
\end{theorem}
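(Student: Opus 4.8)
The plan is to reduce the claim to a standard fact about cross-entropy minimization: for a fixed input $X$, the function value $h(X)[j]$ that minimizes the expected cross-entropy loss against a random label drawn from some distribution is exactly the mean of that label. More precisely, I would first argue that because we place no restriction on the hypothesis class (the $\argmin$ is over \emph{all} functions $h$), the optimization decouples across distinct inputs $X$ and across coordinates $j$; so it suffices to analyze, for each fixed $X$ and each coordinate $j$, the scalar problem of choosing $h(X)[j]$ to minimize the sum of per-sample cross-entropy terms contributed by the training pairs whose label matrix equals $X$.

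Next I would invoke the law of large numbers. Since each $\bm{y}$ attached to a given $X$ in $\mathcal{D}$ is sampled uniformly and independently from $\mathcal{U}_{\bm{y}}(X)$, as $|\mathcal{D}| \to \infty$ the empirical average of the cross-entropy contributions at coordinate $j$ converges to the population expectation $\mathbb{E}_{\bm{y} \sim \mathrm{Unif}(\mathcal{U}_{\bm{y}}(X))}\,[\ell_{\text{CE}}(h(X)[j], \bm{y}[j])]$. (One should note the mild technical point that $X$'s with a continuous generation process occur with probability zero individually; this is handled by recognizing that the true object being learned is the conditional mean function, and the generation scheme is designed so that the conditional law of $\bm{y}$ given $X$ is uniform on $\mathcal{U}_{\bm{y}}(X)$, so the Bayes-optimal predictor under cross-entropy is the conditional mean — I would phrase the limit in that conditional-expectation form rather than pretending each $X$ is hit infinitely often.) Then I would carry out the one-line calculus exercise: writing $q = \Pr[\bm{y}[j] = +1]$ under the uniform distribution and parametrizing the prediction by $p$, the quantity $-q\log p - (1-q)\log(1-p)$ is strictly convex in $p$ with unique minimizer $p = q$; rescaling from $\{0,1\}$-coding to the $\{+1,-1\}$-coding used in the paper, the minimizer is $\mathbb{E}_{\bm{y}\sim\mathrm{Unif}}[\bm{y}[j]]$, which is precisely the $j$th coordinate of $\frac{1}{|\mathcal{U}_{\bm{y}}(X)|}\sum_{\bm{y}\in\mathcal{U}_{\bm{y}}(X)}\bm{y} = h^*(X)$.

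Assembling the pieces: coordinate-wise the minimizer of the limiting objective is $h^*(X)[j]$, and since the per-coordinate problems are independent and each has a unique solution, the full minimizer converges to $h^*(X)$. I would also remark that this is the reason the synthetic data generator only ever needs to sample \emph{one} valid $\bm{y}$ per $X$ (cheap) rather than enumerate all of $\mathcal{U}_{\bm{y}}(X)$ (exponential): in expectation a single uniform sample already pins down the mean.

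The main obstacle is not the calculus — that is routine — but making the limiting statement precise. Taken literally, "$\argmin_h \mathcal{L}(h,\mathcal{D}) \to h^*(X)$ when $|\mathcal{D}|\to\infty$" needs care, because for a fixed finite or countable support of $X$-values an empirical-risk minimizer over all functions will simply memorize and need not converge pointwise to $h^*$; the honest statement is about the population risk and requires the uniform-sampling assumption to translate into "the conditional distribution of the training label given $X$ is $\mathrm{Unif}(\mathcal{U}_{\bm{y}}(X))$." I would therefore spend most of the rigor budget setting up the population objective $\mathbb{E}_{(X,\bm{y})}[\sum_j \ell_{\text{CE}}(h(X)[j],\bm{y}[j])]$, noting its pointwise (in $X$, in $j$) decomposition, and then citing strong consistency of empirical risk minimization / uniform law of large numbers to pass from $\mathcal{D}$ to the population. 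Everything downstream of that reduction is the short convexity argument above.
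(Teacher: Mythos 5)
Your proposal is correct and follows essentially the same route as the paper's proof: restrict attention to the training pairs sharing a given $X$, observe that the unconstrained cross-entropy minimizer at each coordinate is the empirical label mean (after rescaling from $\{+1,-1\}$ to $\{0,1\}$), and then apply the law of large numbers together with the uniformity of $p(\bm{y}|X)$ to identify the limit with $h^*(X)$. Your additional remark about making the limiting statement precise---that one should really work with the population risk and the conditional law of $\bm{y}$ given $X$, since a literal pointwise claim requires each $X$ to be hit infinitely often---is a legitimate refinement of a step the paper's proof passes over silently (it simply asserts $|\mathcal{D}(X)|\to+\infty$), but it does not change the substance of the argument.
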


See proof in Appendix~\ref{app:expect_y_miminize_ce}.
Based on the theorem, we derive the following training data generation method such that for every $X$, the corresponding $\bm{y}$ is uniformly sampled and valid. 
\textbf{Step 1:} We first randomly generate the shape of $X$, by randomly draw $m$ (and $n$) from a uniform distribution $[L_m, H_m]$ (and $[L_n, H_n]$). We provide details of how to choose $L_m$, $H_m$, $L_n$ and $H_n$ in Appendix~\ref{app:impl_lola} and empirically show the trained model generalizes very well outside of the given regions of shape (In fact, 13 datasets out from the 14 datasets we evaluate on are outside of the given regions of shape).
\textbf{Step 2:} 
Given the shape of $X$, we then generate $X$ and the corresponding $\bm{y}$ with the values being sampled uniformly at random. 
\textbf{Step 3:} 
If $\sigma(X, \bm{y})=1$, we keep it as a training data point. This process (Step 1, 2, and 3) is repeated untill we obtain enough training data.
Apparently, since $\bm{y}$ is generated uniformly, for any two different and valid vectors $\bm{y}_1$ and $\bm{y}_2$ with $\sigma(X, \bm{y}_1)=\sigma(X, \bm{y}_2)=1$, the probability of generating $\bm{y}_1$ equals to the probability of generating $\bm{y}_2$, \ie $p(\bm{y}|X)$ is uniform.
\rebuttal{The probability of generating a valid pair in one trial is about $0.2$ (see Appendix~\ref{app:p_gen_valid}). }

\subsection{Model Architecture Design}
\label{sec:model_arch}
Notably,
the input of the model $h$ is a matrix $X$ of size $n\times m$ and the output is a vector $\hat{\bm{y}}$ of size $n$. Thus, a reasonable instantiation of $h$ should satisfy the following three properties: \textbf{(1) Ability to accept arbitrary input size:} The number of data points $n$ and LFs $m$ can vary for different datasets. The model $h$ should be able to accept an input matrix $X$ of arbitrary size. \textbf{(2) Invariance to permutation of LFs:} Intuitively, randomly shuffling the LFs should not change the prediction of any data point. 
Formally, let $P_m$ denote one arbitrary permutation of the $m$ integers in $[0,m-1]$.  
Invariance to permutation of LFs means that $h(X[:,P_m])=h(X),\  \forall P_m$. 
\textbf{(3) Equivariance to permutation of data points:} Smilarily, randomly shuffling the data points should not change the prediction of each data point.
Formally, equivariance to permutation of data points means that $h(X[P_n,:])=h(X)[P_n], \ \forall P_n$ where $P_n$ is defined similarly as $P_m$.

We argue that a graph neural network (GNN) is a good fit here since it can accept input graph of arbitrary size and is permutation equivariant to the nodes~\citep{sanchez2021gentle}.  Therefore, we attempt to represent the input matrix $X$ as a graph and then use a GNN for $h$ in order to satisfy the aforementioned properties. 
Specifically, the left-most matrix and graph in Figure~\ref{fig:overall_arch} illustrate how we represent an input matrix of size $3\times 2$ as a graph.
Entry $X[i,j]$, the weak label of the $i^{th}$ data point provided by the $j^{th}$ LF, is represented as a node $V_{i,j}$ with value $X[i,j]$.  There are two types of edges: solid yellow edge and dashed blue edge. Nodes from the same LF (\ie same column in matrix $X$) are connected with solid yellow edges and nodes from the same data point (\ie same row in matrix $X$) are connected with dashed blue edges. The graph representation $G$ loses no information as one can recover $X$ (or its permutation $X[P_n,P_m]$) from $G$.

\begin{figure}[htb!]
  \centering
  \includegraphics[width=0.8\linewidth]{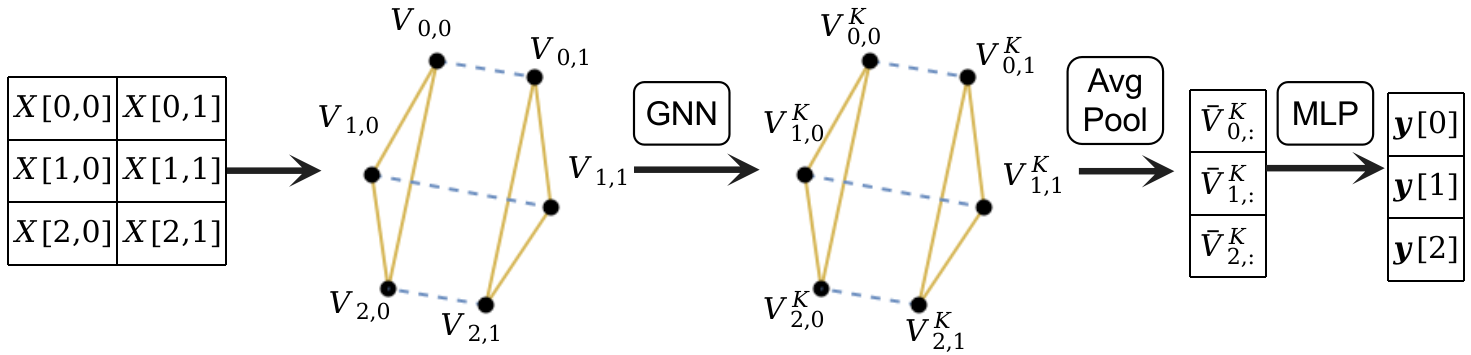}
    \vspace{-2mm}
  \caption{Overall network architecture.}
  \label{fig:overall_arch}
\end{figure}

In graph $G$, if we only look at dashed blue edges, there would be $n$ strongly connected components and each corresponds to one data point. Specifically, the strongly connected component SCC$_i$=$\{V_{i,0}, V_{i,1},\dots \}$ corresponds to the $i^{th}$ data point. 
The overall model architecture is shown in Figure~\ref{fig:overall_arch}:
first we encode the input graph with a GNN of $K$ layers where each node $V_{i,j}$ is encoded with embedding $V_{i,j}^k$ at the $k^{th}$ layer; then after the final layer, we obtain an embedding for each SCC$_i$ (\ie each data point) by pooling all of its nodes $\bar{V}_{i,:}^K=\frac{1}{m}\sum_{j} V_{i,j}^K$; The embedding of each SCC$_i$ is passed to a Multilayer perceptron (MLP) to obtain the final prediction. 
This architecture satisfies all three mentioned properties (see Appendix~\ref{app:arch_three_property}).%

We adopt the standard design of GNN. Since we have two types of edges, we perform message passing for neighboring nodes connected with each type of edges separately. Specifically, at the $k^{th}$ layer in the GNN, the embedding $V_{i,j}^k$ for the node $V_{i,j}$ is obtained as:
\begin{equation}
\label{eq:gnn_layer}
    V_{i,j}^k=f_k (A^k(W_1^k\frac{1}{n}\sum_q V_{q,j}^{k-1}, W_2^k \frac{1}{m}\sum_q V_{i,q}^{k-1}, W_3^k \frac{1}{nm}\sum_{q,l} V_{q,l}^{k-1}, W_4^k V_{i,j}^{k-1}))
\end{equation}
where $W_1^k$, ... ,$W_4^k$ are weight matrices; $\frac{1}{n}\sum_q V_{q,j}^{k-1}$ denotes average pooling over neighboring nodes of $V_{i,j}$ connected with solid yellow edges and $\frac{1}{m}\sum_q V_{i,q}^{k-1}$ denotes average pooling over neighboring nodes of $V_{i,j}$ connected with dashed blue edges; Note we use average pooling because the graph can be of variable size as recommended by~\cite{sanchez2021gentle} and we also include the node's previous embedding $V_{i,j}^{k-1}$ in the average in case the node has no neighbors (this is equivalent to adding a self-edge to each node.). 
We also add the global context of the graph $\frac{1}{nm}\sum_{j,j} V_{i,j}^{k-1}$ to enable message passing beyond neighboring nodes, following the standard practice~\citep{gilmer2017neural, battaglia2018relational}; $A^k(\cdot, \cdot, \cdot, \cdot)$ denotes an aggregation operation and we use simple concatenation; $f_k$ denotes a linear layer with Relu activation. 

\noindent\textbf{Handling Abstention.} %
Handling abstention is straightforward in our approach. We can simply remove the corresponding nodes in our graph. For example, when the $j^{th}$ LF abstains on the $i^{th}$ data point, we simply remove the node $V_{i,j}$ from the graph. 

\subsection{Model Inference on Unseen Dataset}
The trained hyper label model can be applied to any new dataset with the inference being simply a single forward pass. During the forward pass, different data points (rows in matrix $X$) and different LFs (columns in $X$) exchange information through message passing in GNN. This information exchange step can be regarded as the counterpart of the dataset-specific training step of other methods. 

\noindent\textbf{Inference Complexity.} The complexity of a forward pass is dominated by the GNN. Although there are $O(mn^2)$ edges in the graph, there is no need to actually materialize the $O(mn^2)$ edges and the complexity of each GNN layer is only $O(nm)$. In each GNN layer, for the three averaged pooling operations in Equation~\ref{eq:gnn_layer}, the first one with complexity $O(n)$ needs to be computed once for each LF totaling $m$ times so the complexity is $O(nm)$; Similarly, the second one and the third one also have a complexity of $O(mn)$.
Therefore, the time complexity for each GNN layer is $O(mn)$. %

\subsection{Leveraging Ground Truth Label if Given}
\label{ssec:semi_by_finetune}

We pretrain a model $h_0$ on our sythetically generated data. 
When a small set of ground-truth labels is provided, our method can incorporate the labels by fine-tuning the model $h_0$ on the provided labels.  Let $I$ denote the set of indices of the elements in $\bm{y}$ that are provided. For example, when $I=[2, 3]$, it means $\bm{y}[2]$ and $\bm{y}[3]$ are provided. Fine tuning is done by minimizing the cross-entropy loss $\sum_{i\in I} \ell_{\text{CE}}(h(X)[i], \bm{y}[i])$ and $h$ is initialized as the pretrained model $h_0$. After fine-tuning we obtain a model $h'$, and then all labels are obtained by $h'(X)$. We note the fine tuning process is dataset-specific, \ie finetuning is done independently and specifically for each dataset $X$ using the ground-truth labels of that dataset. 

\subsection{Supporting Multi-class Classification Task}
We have only considered the binary labels
and it turns out that 
our trained model for binary labels can be easily used to support multi-class classification datasets by decomposing a multi-class task with $C$ classes to be $C$ one-vs-rest binary classification tasks. 
For multi-class tasks, we have $X[i,j]\in\{0,1,2,\dots C\}$ where $0$ still denotes abstention and other numbers denote all the classes. We construct the label matrix for the $c^{th}$ class as $X_c[i,j] = 1$ if $X[i,j]=c$, $X_c[i,j] = 0$ if $X[i,j] = 0$, and otherwise $X_c[i,j]=-1$.  In this way, we obtain $C$ label matrices $\{X_1, \dots X_c\}$. We apply our pre-trained model $h_0$ on each label matrix of each class and obtain $C$ predicted probability vectors $(\bm{p_1}, \dots, \bm{p_c})$. Then, for the $i^{th}$ data point, its soft label over the $C$ classes is $(\frac{\bm{p_1}[i]}{\sum_c \bm{p_c}[i]}, \dots, \frac{\bm{p_c}[i])}{\sum_c \bm{p_c}[i]})$. We show in experiments this simple method works well on multi-class datasets (4 datasets out of the 14 datasets we use are multi-class datasets).

\section{Experiments}
We evaluate the performance of all label models under both unsupervised and semi-supervised settings.
We provide additional experimental results on 
the performance of end models trained on the generated labels by different label models in Appendix~\ref{app:end_model}. The code and instructions to reproduce the experiments are in supplementary materials. 

\noindent\textbf{Datasets.}
We use all 14 classification datasets in a recent weak supervision benchmark~\citep{wrench} that are from diverse domains (e.g. income/sentiment/spam/relation/question/topic classification tasks). 
We highlight these datasets are only used for evaluation after our model is trained on synthetically generated data, and we never used these datasets during training.
 Table~\ref{tbl:datasets} shows the statistics of all datasets. 
We also use the metrics provided by the benchmark~\citep{wrench} for each dataset (as different datasets need different metrics depending on their application background). 
All LFs are from the original authors of each dataset and are hosted in the benchmark project~\citep{wrench_github}. 

\begin{table}[h]
\caption{14 classification datasets from the weak supervision benchmark~\citep{wrench}}
\centering
\setlength{\tabcolsep}{2pt}
\resizebox{\columnwidth}{!}{
\begin{tabular}{@{}lllllllllllllll@{}}
\toprule
Dataset & Census & IMDB      & Yelp     &Youtube& SMS   & Spouse    & CDR & Commercial & Tennis & Basketball & AGNews & TREC   & SemEval& ChemProt\\ \midrule
\#class  & 2      & 2         & 2         & 2       & 2    & 2        & 2        & 2          & 2      & 2          & 4      & 6        & 9        & 10       \\ \midrule
metric  & F1     & acc       & acc       & acc     & F1   & F1       & F1       & F1         & F1     & F1         & acc    & acc      & acc      & acc      \\ \midrule
\#LF     & 83     & 5         & 8         & 10      & 73   & 9        & 33       & 4          & 6      & 4          & 9      & 68       & 164      & 26       \\ \midrule
\#Data   & 31925  & 25000     & 38000     & 1956    & 5571 & 27766    & 14023    & 81105      & 8803   & 20256      & 120000 & 5965     & 2641     & 16075    \\ \bottomrule
\end{tabular}
}
\label{tbl:datasets}
\end{table}

We consider baselines for both unsupervised and semi-supervised label aggregation.

\vspace{-2mm}
\textbf{Unsupervised Baselines:}
 \textit{(1) Majority Vote (MV).}  The predicted label of each data point is the most common label given by LFs. 
 \textit{(2) Data Programming (DP)}~\citep{ratner2016data}. DP uses a probabilistic graph model (PGM) where each LF is a node and the hidden ground truth is a latent variable. %
\textit{(3) Flyingsquid (FS)}~\citep{fu2020fast}. FS also uses a PGM but gives a closed-form solution with some assumptions. 
\textit{(4) MeTaL}~\citep{ratner2019training}. MeTaL infers the ground truth using a matrix completion model. The latest version of the popular Snorkel system~\citep{snorkel-github} adopts MeTaL as its default label model. 
\textit{(5) NPLM}~\citep{yu2022nplm}. This method is also based on a PGM and assumes LFs are conditionally independent. It supports partial LFs that predict a subset of class labels and is designed to be very efficient. 
\textit{(6) Dawid and Skene’s method (DS)}~\citep{dawid1979maximum}. DS models the confusion matrix of each LF with respect to the ground truth labels.
The confusion matrix is learned by an Expectation-Maximization algorithm. 
\textit{(7) Enhanced Bayesian Classifier Combination (EBCC)}~\citep{li2019exploiting}. This method models the joint distribution of LFs as a mixture of multiple low dimensional tensors. %
\textit{(8) Constrained Label Learning (CLL)}~\citep{arachie2021constrained}.  This method also minimizes the average prediction error where the error is defined using the unknown expected errors of each LFs. \textit{(9) HLM.} This is our \rebuttal{learned Hyper Label Model}.

\vspace{-2mm}
\textbf{Semi-supervised Baselines:} \textit{(1) Semi-supervised DS}~\citep{dawid1979maximum}. This is the semi-supervised extension of the Dawid and Skene’s method.
\textit{(2) AMCL-CC~\citep{mazzetto2021adversarial}.} This method uses labeled data to construct feasibility constraints and provides performance guarantees. \textit{(3) Random Forest.} This method trains a random forest classifier with $X$ as the features using the provided labels. \textit{(4) Semi-supervised HLM.} The semi-supervised version of our method HLM obtained by finetuning on the provided labels.

Note the baseline methods require a dataset-specific learning step, we use the transductive setting (data points used in unsupervised learning is also used to evaluate the learned model) following prior work~\citep{ mazzetto2021adversarial, wrench_issue}. 

\noindent\textbf{Implementation.} We provide the implementation details of our method (e.g. setups and all parameters in data generation/model architecture/model training/validation) in Appendix~\ref{app:impl_lola} and implementation details of the experiments (e.g. hardware/datasets/baselines/setups) in Appendix~\ref{app:impl_exps}. \rebuttal{Since model training is important, here we provide a brief overview on training HLM. We generate each batch of data on-the-fly with a batch size of $50$, \ie each batch consists of $50$ pairs of generated $(X,\bm{y})$. We train our model until training loss converges (loss doesn't decrease in $10^4$ iterations), which takes about $5\times 10^4$ iterations. We noticed that in different runs, the performance of the trained model can vary, so we use a synthetic validation set $\mathcal{D}'$ to select the best run out of ten runs. The validation set is generated with a different method from a prior work~\citep{wrench}. 
}

\subsection{Unsupervised Label Aggregation}
The performance of all methods on all 14 datasets averaged over five runs are shown in Table~\ref{tbl:label_model_performance}. To maintain the table to be readable, we only show the error bars for the averaged scores. 
 Again, for our method HLM, we note only synthetically generated data is used for training and the 14 datasets are only used to evaluate the trained model. We note MV and our method HLM are deterministic while the other methods can give different results with different seeds.
For HLM, 
the error bar is obtained by repeating the training process multiple times and then performing inference with different trained models. 

\paragraph{Main results.} First, our results align with the benchmark~\citep{wrench} where MeTaL is slightly better than MV. 
The difference in numbers from the benchmark~\citep{wrench} is due to that we use a transducive setting following~\citep{mazzetto2021adversarial, wrench_issue}. 
Second, HLM outperforms the best baseline CLL by 1.4 points on average.
Third, HLM is the best on 8 out of 14 datasets; On the remaining 6 datasets, HLM is the second best or is close to the second best method.

\begin{table}[h]
\caption{Performance (F1 or acc score depending on the dataset) on all datasets }
\centering
\setlength{\tabcolsep}{2pt}
\resizebox{\columnwidth}{!}{
\begin{tabular}{@{}llllllllllllllll@{}}
\toprule
Dataset & Census         & IMDB           & Yelp           & Youtube        & SMS            & Spouse         & CDR            & Commercial     & Tennis         & Basketball     & AGNews         & TREC           & SemEval        & ChemProt       & \textbf{AVG.}   \\ \midrule
MV      & 22.2          & \textbf{75.0} & \textbf{74.4} & 80.3          & 84.0          & \textbf{51.6} & 63.3          & \textbf{85.9} & 85.0          & 18.9          & 81.4          & 49.9          & \textbf{84.2} & 53.7          & 65.0$\pm$0.0          \\ \midrule
DP      & 11.1          & 74.4          & 71.9          & 84.5          & 83.8          & 50.3          & 33.9          & 77.5          & \textbf{85.1} & 17.1          &81.7          & 47.2          & 73.5          & \textbf{56.2} & 60.6$\pm$0.1          \\ \midrule
FS      & 17.1          & 74.5          & 74.0          & 83.7          & 74.4          & 49.9          & 69.6          & 82.5          & 84.0          & 17.1          & 81.3          & 50.1          & 23.8          & 52.4          & 59.6$\pm$0.0          \\ \midrule
MeTaL   & 51.1          & \textbf{75.0} & \textbf{74.4} & 86.0          & 57.7          & 49.9          & 67.9          & 83.7          & 80.9          & \textbf{19.0} & \textbf{82.2} & 52.1          & \textbf{84.2} & 52.9          & 65.5$\pm$0.2          \\ \midrule
NPLM & 0.0 & 55.2 & 68.3 & 45.2 & 0.0 & 34.3 &0.0 &76.5 & 85.0 &0.0 & 81.3 & 36.5 & 30.2 & 48.4 & 40.1$\pm$0.0 \\ \midrule
DS      & 0.0          & 74.4          & 68.3          & 45.2          & 65.0          & 34.3          & 0.1          & 77.8          & 85.0          & 17.1          & 26.6          & 20.9          & 73.5          & 35.1          & 44.5$\pm$0.0 \\ \midrule
EBCC & 0.0 & 74.4 & 69.6 & 45.2 & 0.0 &34.3 & 8.7 & 77.5 & 85.0 &17.1&27.8&20.8&30.2&35.0 &37.6$\pm$0.1\\ \midrule
\revision{CLL} &\revision{ 53.6} & \revision{72.7} & \revision{72.0} & \revision{86.1} & \revision{\textbf{84.2}} & \revision{50.0} & \revision{64.9} & \revision{84.8} & \revision{83.5} & \revision{17.5} & \revision{80.7} &\revision{59.0} & \revision{\textbf{84.2}} & \revision{53.1} & \revision{67.6$\pm $0.0} \\ \midrule
HLM    & \textbf{56.1} & \textbf{75.0} & \textbf{74.4} & \textbf{91.4} & 84.1 & \textbf{51.6} & \textbf{71.0} & 83.6          & 84.3          & 17.1          & 81.4          & \textbf{59.8} & \textbf{84.2} & 52.3          & \textbf{69.0}$\pm$0.2 \\ \bottomrule
\end{tabular}
}
\label{tbl:label_model_performance}
\end{table}

\paragraph{Efficiency.} We report the running time in Table~\ref{tbl:label_model_running_time}. When measuring the running time, we use GPU for methods that support GPU (MeTaL, NPLM, and HLM). \rebuttal{CPU-only running times are in Appendix~\ref{ssec:running_time_all}.}
HLM requires less than 1 seconds on every dataset. HLM is on average 6 times (and can be up to 18 times) faster than the fastest baseline (except Majority Vote) and is on average 50 times faster than the baseline (CLL) with the best accuracy. 
This is because all prior methods (except Majority Vote) require an unsupervised learning process while HLM performs prediction in a single forward pass just like Majority Vote. 
We note that these 14 benchmark datasets are relatively small (as creating a large benchmark dataset with ground-truth labels is expensive). In industry scenarios, LFs can be applied on millions of data points to create labels~\citep{bach2019snorkel}. The runtime gain of HLM will be more significant and HLM will enable the LF development process to be more interactive.  

\begin{table}[htb!]
\caption{Running time (seconds) of label aggregation on all datasets }
\centering
\setlength{\tabcolsep}{2pt}
\resizebox{\columnwidth}{!}{
\begin{tabular}{@{}llllllllllllllll@{}}
\toprule
Dataset & Census & IMDB & Yelp  & Youtube & SMS  & Spouse & CDR  & Commercial & Tennis & Basketball & AGNews & TREC  & SemEval & ChemProt & \textbf{AVG.} \\ \midrule
MV      & $<$0.1   & $<$0.1 & $<$0.1  & $<$0.1    & $<$0.1 & $<$0.1   & $<$0.1 & $<$0.1       & $<$0.1   & $<$0.1       & $<$0.1   & $<$0.1  & $<$0.1    & $<$0.1     & $<$0.1         \\ \midrule
DP      & 147.8  & 18.8 & 40.5  & 2.5     & 14.4 & 8.4    & 29.5 & 8.5        & 10.0   & 14.9       & 225.0  & 100.8 & 190.2   & 213.0    & 73.2         \\ \midrule
FS      & 21.1   & 1.7  & 3.7   & 0.2     & 3.2  & 0.8    & 3.7  & 0.6        & 0.6    & 14.9       & 22.1   & 16.3  & 69.0    & 26.4     & 12.2         \\ \midrule
MeTaL   & 0.5    & 0.3  & 0.4   & 0.4     & 0.4  & 0.3    & 0.4  & 0.4        & 0.4    & 0.4        & 0.5    & 3.6   & 4.6     & 3.6      & 1.2           \\ \midrule
NPLM    & 15.7   & 4.0  & 5.7   & 0.4     & 2.2  & 1.8    & 6.3  & 11.2       & 1.5    & 3.4        & 27.9   & 5.4   & 3.4     & 12.1     & 7.2           \\ \midrule
DS &2.4 &79.8 &116.1&0.2&3.6&0.9&29.7&267.7&4.6& 2.1&16.3&78.3&36.6&255.9 & 63.9 \\\midrule
EBCC &3.9 &5.1 & 52.5 & 2.2 & 2.8 & 2.3 & 5.8 & 3.0 &2.5 &6.0 & 18.0 & 9.0 & 9.8 & 84.8 & 14.8 \\\midrule
CLL & 33.7 &2.9&6.6&0.5&3.8&1.4&6.0&7.4&1.1&2.0&28.5&12.4&20.5&21.3&10.6\\
\midrule
HLM    & 0.1    & 0.1  & 0.1   & 0.1     & 0.1  & 0.2    & 0.2  & 0.3        & 0.2    & 0.3        & 0.4    & 0.2   & 0.3     & 0.2      & 0.2           \\ \bottomrule
\end{tabular}
}
\label{tbl:label_model_running_time}
\end{table}

\subsection{Semi-supervised Label Aggregation}
\label{sec:semi_lola}

For each dataset, we randomly sample $N_{\text{gt}}$ data points as the data points with known ground-truth labels and we evaluate on the remaining data points. When $N_{\text{gt}}>0.7n$, we only select $0.7n$ data points to keep $30\%$ of the data for evaluation in order to have a reliable evaluation score. We vary $N_{\text{gt}}$ from $10$ to $10000$.
When finetuning HLM (with the method in Section~\ref{ssec:semi_by_finetune}), we use a smaller learning rate $lr=0.0001$ to prevent overfitting (originally $lr=0.001$). Intuitively, when $N_{\text{gt}}$ is small, we trust the pre-trained HLM more than the provided labels; when $N_{\text{gt}}$ is large, we trust the provided labels more than the pre-trained HLM. Therefore, we relate the number of finetuning epochs to $N_{\text{gt}}$ by setting the number of epochs as $\sqrt{N_{\text{gt}}}$. 

The results are shown in Figure~\ref{fig:semisupervised}. 
When $N_{\text{gt}}>40$, semi-supervised HLM outperforms unsupervised HLM. 
Semi-supervised HLM outperforms the other three baselines when $N_{\text{gt}}<1000$ and ties with AMCL-CC and Random Forest when $N_{\text{gt}}>1000$. 
We highlight semi-supervised HLM is also the most efficient, e.g. when $N_{\text{gt}}=10000$, the running time averaged over all datasets is $3.1$ seconds for semi-supervised HLM, $61.3$ seconds for Semi-supervised DS, $261.8$ seconds for AMCL-CC, and $4.8$ seconds for random forest. 

\begin{figure}[htb!]
  \begin{minipage}[t]{.6\linewidth}
    \centering
  \includegraphics[width=0.8\linewidth]{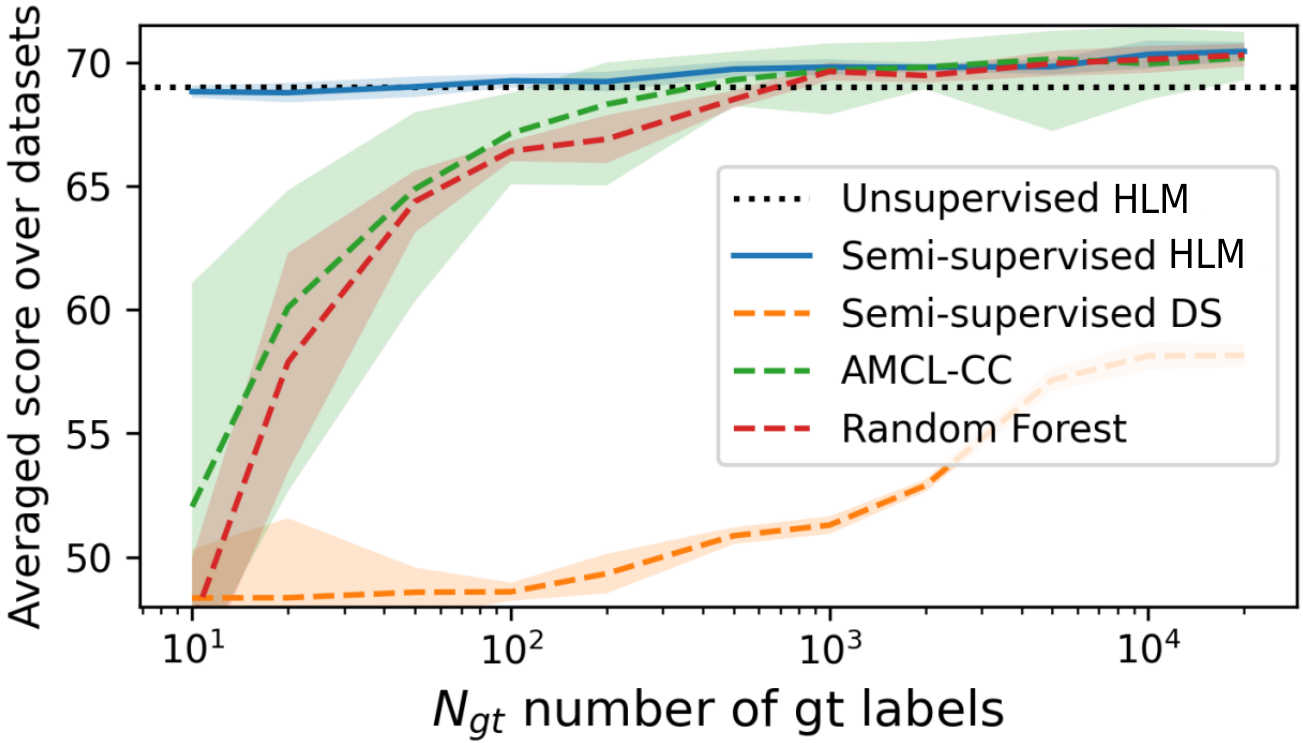}
    \vspace{-4mm}
  \caption{\revision{Semi-supervised performance.}}
  \label{fig:semisupervised}
  \end{minipage}\hfill
  \begin{minipage}[t]{0.4\linewidth}
    \centering
    \vspace{-42mm}
        \captionof{table}
      {%
        Ablation study. "*" denotes replacing the component it precedes with a naive one.%
        \label{tbl:ablation}%
      }
     \vspace{4mm}
\begin{tabular}{@{}ll@{}}
\toprule
                    & Avg score \\ \midrule
HLM                & 69.0     \\ \midrule
*data generation           & 64.2     \\ 
*model architecture       & \rebuttal{61.8}     \\ 
*better-than-random & 67.6     \\ \bottomrule
\end{tabular}
  \end{minipage}
\end{figure}

\subsection{Ablation Study}
\label{ssec:ablation}
We perform ablation study (under unsupervised label aggregation setting) in three aspects: %
(1) We replace our data generation method with the one proposed in~\citep{wrench} that was originally used to generate LFs to evaluate label models.
\rebuttal{(2) We replace our model architecture with a naive architecture based on MLP and another architecture based on DeepSet~\citep{zaheer2017deep} (see details in Appendix~\ref{app:impl_exps}). We report the best result of the two architectures.}
(3) \rebuttal{We replace our better-than-random assumption in Equation~\ref{eq:our_btr_assumption} with a straightforward assumption that each LF is better than random in each class. }
The results are shown in Table~\ref{tbl:ablation}. Replacing each component reduces performance. 
\rebuttal{In particular, replacing our assumption with the straightforward assumption decreases performance} because the assumption that each LF is better-than-random on each class is not satisfied in the real-world datasets.

\section{Conclusion}
We present a hyper label model for programmatic weak supervision, which infers the ground-truth labels for each dataset in a single forward pass and does not require any ad-hoc dataset-specific parameter learning step. 
The hyper label model approximates an analytical optimal method (which is computationally intractable due to its exponential complexity).
We generate synthetic training data that ensures the trained hyper label model to approximate the analytical solution and design a model architecture based on GNN to ensure the model to be invariant to the permutation of LFs and equivariant to the permutation of data points. 
We experimentally verify the superiority of the hyper label model in both accuracy and efficiency with both unsupervised and semi-supervised label aggregation settings over 14 benchmark datasets.

\bibliography{sample-base}
\bibliographystyle{iclr2023_conference}

\newpage
\appendix

\section{Optimalities of the Uniform Distribution}
\label{app:uni_min_KL}
We aim to approximate an unknown distribution $p(\bm{y}|X)$ (which can be different in different applications) with an fixed distribution $q(\bm{y}|X)$. Since both distributions are defined on a finite set $\mathcal{U}_{\bm{y}}(X)$, we can use the probabilities of the elements in $\mathcal{U}_{\bm{y}}(X)$ to represent each of the two distributions. Specifically, we represent $p(\bm{y}|X)$ as $\bm{p}=\{p_1, \dots, p_{|\mathcal{U}_{\bm{y}}(X)|}\}$ and $q(\bm{y}|X)$ as $\bm{q}=\{q_1, \dots, q_{|\mathcal{U}_{\bm{y}}(X)|}\}$. Similarly, we denote the uniform distribution (\ie $p'(\bm{y}|X)=\frac{1}{|\mathcal{U}_{\bm{y}}(X)|}$) as $\bm{u}=\{u_1, \dots u_{|\mathcal{U}_{\bm{y}}(X)|}\}$. Apparently, $\forall i$, we have $0\leq p_i\leq 1$, $0\leq q_i\leq 1$, $u_{i}=\frac{1}{|\mathcal{U}_{\bm{y}}(X)|}$, $\sum_i p_i=1$, $\sum_i q_i=1$, and $\sum_i u_i=1$. 
Using the uniform distribution $\bm{u}$ to approximate $\bm{p}$ is the optimal in both the worst case and the average case. The two optimalities are formally defined as the following:

\textbf{(1) Worst-case Optimal:} The uniform distribution $\bm{u}$ has the minimum maximum distance to the unknown distribution $\bm{p}$:
\begin{equation}
\label{eq:uniform_worst_case}
\bm{u}=\operatorname*{argmin}_{\bm{q}} \max_{\bm{p}}\text{dist}(\bm{p}, \bm{q})
\end{equation}
where "$\text{dist}$" can be the KL divergence or $L_{\alpha}$ distance $\forall \alpha >1$. Note the conventional name is "$L_p$" distance, but to avoid reusing the same notation $p$ for different meanings, we use the name "$L_{\alpha}$" distance instead. 

\textbf{(2) Average-case Optimal:}
The uniform distribution $\bm{u}$ has the minimum expected KL divergence to the unknown distribution $\bm{p}$ under a mild assumption. 
Let $\mathcal{P}(\bm{p})$ denote the probability of the unknown distribution being a specific distribution $\bm{p}$ (e.g. $\mathcal{P}(\bm{u})$ would be denoting the probability of the unknown distribution being the uniform distribution \ie $p(\bm{p}=\bm{u})$).  
Formally:
\begin{equation}
\bm{u} =\operatorname*{argmin}_{\bm{q}} E_{\bm{p}} [\text{KL}(\bm{p},\bm{q})]= \operatorname*{argmin}_{\bm{q}} \int_{\bm{p}}\text{KL}(\bm{p},\bm{q})\mathcal{P}(\bm{p}) d\bm{p}
\end{equation}
under the assumption that $\mathcal{P}(\bm{p})$ is centrally symmetric, formally:
\begin{equation}
\int_{\bm{p}}\mathcal{P}(\bm{p})\bm{p} d\bm{p}=\bm{u}
\end{equation}

We provide a formal proof for the two optimalities in the following:

\textbf{Proof for Worst-case Optimal.} 
\begin{proof}
We first prove Equation~\ref{eq:uniform_worst_case} for KL divergence. 
\begin{equation}
\begin{split}
&\max_{\bm{p}} \text{KL}(\bm{p}, \bm{q})\\
=&\max_{\bm{p}} \sum_i p_i\log \frac{p_i}{q_i}\\
=&\max_{\bm{p}} \sum_i p_i\log p_i + p_i\log\frac{1}{q_i}\\
\end{split}
\end{equation}
The maximum of the first term is zero, as $p_i\log p_i\leq 0$ due to $p_i\geq 0$ and $\log p_i\leq 0$. The maximum is obtained when there is a $j$ such that $p_j=1$ and $p_i=0, \ \forall i\neq j$; 

$j$ also comes into play in the maximum of the second term. 
We have $\sum_{i} p_i\log\frac{1}{q_i}\leq \sum_{i} p_i\max_k\log\frac{1}{q_k}=\max_k\log\frac{1}{q_k}$. Therefore, the maximum of of the second term is $\max_i\log\frac{1}{q_i}$ which is obtained when $j=\arg\max_{i}  \log(\frac{1}{q_i})$, $p_j=1$ and $p_i=0, \ \forall i\neq j$. 

We can see that the maximum of both terms is achieved at the same time with $j=\arg\max_{i}  \log(\frac{1}{q_i})$, $p_j=1$ and $p_i=0, \ \forall i\neq j$.  The maximum value is $\max_i\log\frac{1}{q_i}$.

Therefore, 
\begin{equation}
\begin{split}
\max_{\bm{p}}\text{KL}(\bm{p}, \bm{q})
=&\max_i\log\frac{1}{q_i}\\
=&\log \frac{1}{\min_i q_i}\\
\geq &\log |\mathcal{U}_{\bm{y}}(X)|
\end{split}
\end{equation}
The inequality is because $\min_i q_i\leq \frac{1}{|\mathcal{U}_{\bm{y}}(X)|}$ (otherwise, $\sum_i q_i>\sum_i \frac{1}{|\mathcal{U}_{\bm{y}}(X)|} >1$). The equality of the inequality is obtained when $\bm{q}$ is the uniform distribution, \ie$\bm{q}=\bm{u}$. Therefore, $\bm{u}=\operatorname*{argmin}_{\bm{q}} \max_{\bm{p}}\text{KL}(\bm{p}, \bm{q})$.

Next, we prove Equation~\ref{eq:uniform_worst_case} for $L_\alpha$ distance $\forall \alpha>1$. 

The $L_\alpha$ distance is defined as:
\begin{equation}
L_\alpha(\bm{p}, \bm{q})=(\sum_i |p_i-q_i|^\alpha)^{1/\alpha}
\end{equation}
Take the derivative of $L_\alpha(\bm{p}, \bm{q})$ with respect to a $p_i$:
\begin{equation}
\frac{\partial L_\alpha(\bm{p}, \bm{q})}{p_i}=\begin{cases}\frac{1}{\alpha}(\sum_i |p_i-q_i|^\alpha)^{1/\alpha-1}\alpha (p_i-q_i)^{\alpha-1} \ \text{if}\  p_i\geq q_i\\
-\frac{1}{\alpha}(\sum_i |p_i-q_i|^\alpha)^{1/\alpha-1}\alpha (q_i-p_i)^{\alpha-1} \ \text{otherwise}
\end{cases}
\end{equation}
This means if $p_i-q_i\geq p_j-q_j$, $\frac{\partial L_\alpha(\bm{p}, \bm{q})}{p_i}
\geq \frac{\partial L_\alpha(\bm{p}, \bm{q})}{p_j}$. Therefore, replacing $p_i$, $p_j$ with $p_i+\delta$, $p_j-\delta$ where $\delta>0$ increases $L_\alpha(\bm{p}, \bm{q})$ and eventually replacing $p_i$, $p_j$ with $p_i+p_j$, $0$ increases $L_\alpha(\bm{p}, \bm{q})$. 
Let $k=\arg\max_i p_i-q_i$. For each pair $(p_k, p_i) i\neq k$, we replace $p_k$ to be $p_k+p_i$ and $p_i$ to be $0$ and eventually we have $p_k=1$ and $p_i=0, i \neq k$:
\begin{equation}
(\sum_i |p_i-q_i|^\alpha)^{1/\alpha}\leq (\sum_{i\neq k} |q_i|^\alpha+|1-q_k|^\alpha)^{1/\alpha}
\end{equation}
Apparently, when $k=\arg\min_i q_k$, the right hand side is further maximized. Without loss of generality, we can assume $q_1\leq q_2\leq \dots \leq q_{|\mathcal{U}_{\bm{y}}(X)|}$. Therefore:
\begin{equation}
\max_{\bm{p}}L_\alpha(\bm{p}, \bm{q})= ((1-q_1)^\alpha+\sum_{i>1} q_i^\alpha)^{1/\alpha}
\end{equation}
By the Hölder's inequality~\citep{holder_inequality}:
\begin{equation}
\sum_{i>1} q_i^\alpha \geq {(|\mathcal{U}_{\bm{y}}(X)|-1)}^{1-\alpha}(\sum_{i>1} q_i)^{\alpha}={(|\mathcal{U}_{\bm{y}}(X)|-1)}^{1-\alpha}(1-q_1)^{\alpha}
\end{equation}
where equality in the inequality is obtained when $q_2=\dots=q_{|\mathcal{U}_{\bm{y}}(X)|}$. 
Therefore:
\begin{equation}
\begin{split}
\max_{\bm{p}}L_\alpha(\bm{p}, \bm{q}) \geq& ((1-q_1)^\alpha+{(|\mathcal{U}_{\bm{y}}(X)|-1)}^{1-\alpha}(1-q_1)^{\alpha})^{1/\alpha} \\
\geq& ((1-\frac{1}{|\mathcal{U}_{\bm{y}}(X)|})^\alpha+{(|\mathcal{U}_{\bm{y}}(X)|-1)}^{1-\alpha}(1-\frac{1}{|\mathcal{U}_{\bm{y}}(X)|})^{\alpha})^{1/\alpha}\\
= &((\frac{|\mathcal{U}_{\bm{y}}(X)|-1}{|\mathcal{U}_{\bm{y}}(X)|})^\alpha+\frac{|\mathcal{U}_{\bm{y}}(X)|-1}{|\mathcal{U}_{\bm{y}}(X)|^\alpha})^{1/\alpha}
\end{split}
\end{equation}
where the second inequality is because $((1-q_1)^\alpha+{(|\mathcal{U}_{\bm{y}}(X)|-1)}^{1-\alpha}(1-q_1)^{\alpha})^{1/\alpha}$ monotonically decreases as $q_1$ increases and we have $q_1\leq \frac{1}{|\mathcal{U}_{\bm{y}}(X)|}$ because $q_1$ is the minimum in $\bm{q}$, \ie $q_1\leq q_2\leq \dots q_{|\mathcal{U}_{\bm{y}}(X)|}$. 

In summary, the minimum of $\max_{\bm{p}}L_\alpha(\bm{p}, \bm{q})$ is obtained when $q_2=q_3=\dots =q_{|\mathcal{U}_{\bm{y}}(X)|}$ and $q_1=\frac{1}{|\mathcal{U}_{\bm{y}}(X)|}$, which means $q_1=q_2=q_3=\dots =q_{|\mathcal{U}_{\bm{y}}(X)|}=\frac{1}{|\mathcal{U}_{\bm{y}}(X)|}$. In other words, $\bm{q}=\bm{u}$. Therefore, $\bm{u}=\operatorname*{argmin}_{\bm{q}} \max_{\bm{p}}L_\alpha(\bm{p}, \bm{q})$

\end{proof}

\textbf{Proof for Average-case Optimal.} 
\begin{proof}
\begin{equation}
\begin{split}
E_{\bm{p}} [\text{KL}(\bm{p},\bm{q})]=& \int_{\bm{p}}\text{KL}(\bm{p},\bm{q})\mathcal{P}(\bm{p}) d\bm{p}\\
=&\int_{\bm{p}}\mathcal{P}(\bm{p})\sum_i p_i\log \frac{p_i}{q_i} d\bm{p}\\
=&\int_{\bm{p}}\mathcal{P}(\bm{p})\sum_i p_i\log p_i d\bm{p}-\int_{\bm{p}}\mathcal{P}(\bm{p})\sum_i p_i\log q_i d\bm{p}\\
\end{split}
\end{equation}
Since the first term is irrelevant to $\bm{q}$, we have:
\begin{equation}
\begin{split}
E_{\bm{p}} [\text{KL}(\bm{p},\bm{q})] =& \text{constant}-\int_{\bm{p}}\mathcal{P}(\bm{p})\sum_i p_i\log q_i d\bm{p}\\
=&\text{constant}-\sum_i \log q_i\int_{\bm{p}}\mathcal{P}(\bm{p})p_i d\bm{p}\\
=&\text{constant}-\sum_i u_i\log q_i\\
\end{split}
\end{equation}
where the last equation is by the assumption that $\mathcal{P}(\bm{p})$ is centrally symmetric, \ie $\int_{\bm{p}}\mathcal{P}(\bm{p})\bm{p} d\bm{p}=\bm{u}
$.  
Therefore:
\begin{equation}
\begin{split}
E_{\bm{p}} [\text{KL}(\bm{p},\bm{q})] =& \text{constant}-\sum_i u_i\log q_i\\
=& \text{constant}-\frac{1}{|\mathcal{U}_{\bm{y}}(X)|}\sum_i \log q_i\\
=&  \text{constant}-\frac{1}{|\mathcal{U}_{\bm{y}}(X)|} \log \prod_i q_i\\
\geq & \text{constant}-\frac{1}{|\mathcal{U}_{\bm{y}}(X)|} \log(( \frac{\sum_i q_i}{|\mathcal{U}_{\bm{y}}(X)|})^{|\mathcal{U}_{\bm{y}}(X)|})\\
=& \text{constant}+\log(|\mathcal{U}_{\bm{y}}(X)|)
\end{split}
\end{equation}
where the inequality is the inequality of arithmetic and geometric means. The equality of the inequality is obtained when $q_1=q_2=\dots=q_{|\mathcal{U}_{\bm{y}}(X)|}=\frac{1}{|\mathcal{U}_{\bm{y}}(X)|}$, \ie $\bm{q}=\bm{u}$. Therefore, $\bm{u} =\operatorname*{argmin}_{\bm{q}} E_{\bm{p}} [\text{KL}(\bm{p},\bm{q})]$.
\end{proof}

\section{Proof for Theorem 2}
\label{app:expect_y_miminize_ce}
$\forall X \in \mathcal{D}$, if the corresponding $\bm{y}$ is uniformly sampled and valid,  when $|\mathcal{D}|\to +\infty$, then $\argmin_{h} \mathcal{L}(h, \mathcal{D}) \to h^*(X) = \frac{1}{|\mathcal{U}_{\bm{y}}(X)|}\sum_{\bm{y}\in \mathcal{U}_{\bm{y}}(X)} \bm{y}$.

\begin{proof}
For each $X$, let $\mathcal{D}(X)$ denote the subset $\{(X, \bm{y'}_1), (X, \bm{y'}_2),\dots\}$ of $\mathcal{D}$. The cross entropy loss on $\mathcal{D}(X)$ is:
\begin{equation}
-\sum_{i=1}^{|\mathcal{D}(X)|}\sum_{j=1}^{n} \frac{1+\bm{y'}_i[j]}{2}\log(\frac{1+h(X)[j]}{2})+(1-\frac{1+\bm{y'}_i[j]}{2})\log(1-\frac{1+h(X)[j]}{2})
\end{equation}
where $n$ denotes the number of rows in $X$; We use $[j]$ to index the $j$th item of its preceding vector; We convert the region of $\bm{y'}_i[j]$ and $h(X)[j]$ from $[-1, 1]$ to $[0, 1]$ by adding $1$ then dividing by $2$. 
By taking derivative and setting it to zero, the above equation is minimized when:
\begin{equation}
h(X)[j] = \frac{\sum_{i=1}^{|\mathcal{D}(X)|} \bm{y'}_i[j]}{|\mathcal{D}(X)|},\ \forall j 
\end{equation}
When $|\mathcal{D}|\to +\infty$ (so that $|\mathcal{D}(X)|\to +\infty$), by the law of large numbers~\citep{law_large_numbers}, $h(X)[j]=\frac{\sum_{i=1}^{|\mathcal{D}(X)|} \bm{y'}_i[j]}{|\mathcal{D}(X)|}=E(\bm{y}[j]|X)$. Since $p(\bm{y}|X)$ is uniform, $E(\bm{y}[j]|X)=\sum_{\bm{y}\in \mathcal{U}_{\bm{y}}(X)} \frac{1}{|\mathcal{U}_{\bm{y}}(X)|}\bm{y}[j]$ for $\forall j$. This means $h(X)= \sum_{\bm{y} \in \mathcal{U}_{\bm{y}}(X)} \frac{1}{|\mathcal{U}_{\bm{y}}(X)|}\bm{y}=h^*(X)$. 
\end{proof}

\section{Probability of Generating a Valid Pair}
\label{app:p_gen_valid}

To simplify our analysis, in the following, we only consider $\bm{y}$ that contains both $-1$ and $+1$, which has a probability $p_0=1-\frac{2}{2^n}$. $p_0\approx 1$ when $n
\geq 100$ (When generating data, we sample $n$ from $[L_n, H_n]=[100, 2000]$ which we explain in Appendix~\ref{app:impl_lola}). 

Let $S$ denote the set of all possible pairs of $(X, \bm{y})$ and let $\mathcal{U}=\{(X, \bm{y})|\sigma(X, \bm{y})=1\}$ denote the set of all valid pairs. $S$ is made up by three subsets: $\mathcal{U} = \{(X, \bm{y})|\sigma(X, \bm{y})=1\}$, $S_e=\{(X,\bm{y})|\sum_{j=0}^{m-1}g(X, \bm{y}, j,-1)= \frac{m}{2} \ \text{or} \ \sum_{j=0}^{m-1}g(X, \bm{y}, j,1)= \frac{m}{2}\}$ and $S_c=S-\mathcal{U}-S_e$. 
Apparently $S_c$ is also made up by three subsets, \ie $S_c=S_{c_1}\cup S_{c_2} \cup S_{c_3}$ where $S_{c_1}=\{(X,\bm{y})|\sum_{j=0}^{m-1}g(X, \bm{y}, j,-1)< \frac{m}{2} \ \text{and} \ \sum_{j=0}^{m-1}g(X, \bm{y}, j,1)< \frac{m}{2}\}$,  $S_{c_2}=\{(X,\bm{y})|\sum_{j=0}^{m-1}g(X, \bm{y}, j,-1)< \frac{m}{2} \ \text{and} \ \sum_{j=0}^{m-1}g(X, \bm{y}, j,1)> \frac{m}{2}\}$ and  $S_{c_3}=\{(X,\bm{y})|\sum_{j=0}^{m-1}g(X, \bm{y}, j,-1)> \frac{m}{2} \ \text{and} \ \sum_{j=0}^{m-1}g(X, \bm{y}, j,1)< \frac{m}{2}\}$.

\begin{lemma}
\label{lemma}
$|S_{c_1}|=|S_{c_2}|=|S_{c_3}|=|\mathcal{U}|$.
\end{lemma}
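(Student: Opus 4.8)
The plan is to exhibit explicit bijections between $\mathcal{U}$ and each of $S_{c_1}, S_{c_2}, S_{c_3}$, built from sign flips on the columns of $X$. The key observation is that the better-than-random indicator $g(X,\bm{y},j,c)$ depends only on the $j$th column of $X$ and on $\bm{y}$, and that negating the $j$th column $X[:,j] \mapsto -X[:,j]$ swaps the two counts $\sum_i \mathbf{1}_{X[i,j]=c\,\&\,\bm{y}[i]=c}$ and $\sum_i \mathbf{1}_{X[i,j]=-c\,\&\,\bm{y}[i]=c}$, hence flips $g$ from $1$ to $0$ and vice versa (this uses that we restricted to $\bm{y}$ containing both labels, so strict inequalities are generic; we will also need to account for the ties in $S_e$, or simply restrict attention to the complement of $S_e$ throughout as the excerpt does).

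First I would handle $S_{c_1}$. Given a valid pair $(X,\bm{y})\in\mathcal{U}$, so $\sum_j g(X,\bm{y},j,+1) > m/2$ and $\sum_j g(X,\bm{y},j,-1) > m/2$, I want a canonical way to pick a subset of columns to negate so that both sums drop below $m/2$. The natural move: negate \emph{every} column. Under $X \mapsto -X$, each $g(X,\bm{y},j,c)$ flips to $1-g(X,\bm{y},j,c)$, so $\sum_j g(-X,\bm{y},j,c) = m - \sum_j g(X,\bm{y},j,c) < m/2$ for both $c=\pm 1$. Thus $X \mapsto -X$ (with $\bm{y}$ fixed) maps $\mathcal{U}$ into $S_{c_1}$, and it is clearly an involution on the set of $(X,\bm{y})$ with no ties, so it restricts to a bijection $\mathcal{U} \to S_{c_1}$; hence $|S_{c_1}| = |\mathcal{U}|$.

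Next, $S_{c_2}$ and $S_{c_3}$. Here I need to flip exactly one of the two class-sums. The trick is to negate a column-selected subset depending on $\bm{y}$: for $S_{c_2}$ I want $\sum_j g(\cdot,-1)$ to go below $m/2$ while $\sum_j g(\cdot,+1)$ stays above. Negating column $j$ flips \emph{both} $g(\cdot,j,+1)$ and $g(\cdot,j,-1)$ simultaneously, so negating columns alone cannot break the symmetry between the two classes — I instead need an operation that flips one class's contribution without flipping the other. The cleaner route is to flip the roles of $+1$ and $-1$ restricted to certain rows, or more simply to observe a symmetry of the whole construction: consider the map $(X,\bm{y}) \mapsto (X', \bm{y}')$ where $\bm{y}' = -\bm{y}$ and $X'$ is obtained by negating $X$ only in the columns where... — actually the cleanest is: $g(X,\bm{y},j,-1) = g(-X, -\bm{y}, j, +1)$ by definition (replacing $c$ by $-c$ and simultaneously negating $X$ and $\bm{y}$ leaves each indicator set fixed), so the pair $(X,\bm{y}) \mapsto$ (negate a subset of columns to flip only the $+1$-count on $(-X,-\bm{y})$) can be transported. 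I would therefore build the $S_{c_2}$ bijection by composing: start from $\mathcal{U}$, apply the column-negation that flips the $-1$-sum — concretely, for each column $j$ with $g(X,\bm{y},j,-1)=1$ negate it, and for columns with $g=0$ leave them — wait, that flips the count only in a controlled-but-not-exact way. The genuinely clean statement is: negate column $j$ iff $g(X,\bm{y},j,-1)=1$; but to target a precise threshold I should instead note that we don't need the counts to land anywhere specific, only on the correct side of $m/2$, and negating \emph{all} columns with $g(\cdot,j,-1)=1$ sends $\sum_j g(\cdot,j,-1)$ to $0 < m/2$, while $\sum_j g(\cdot,j,+1)$ becomes $\#\{j : g(\cdot,j,+1)=0,\ g(\cdot,j,-1)=1\} + \#\{j: g(\cdot,j,+1)=1,\ g(\cdot,j,-1)=0\}$, which need not exceed $m/2$ — so this map does not obviously land in $S_{c_2}$ and is not obviously invertible. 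This is exactly where the argument gets delicate.

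The main obstacle, then, is constructing the bijections $\mathcal{U} \leftrightarrow S_{c_2}$ and $\mathcal{U} \leftrightarrow S_{c_3}$: unlike the all-columns-negation for $S_{c_1}$, there is no single column-negation that cleanly flips one class-sum past $m/2$ while preserving the other's, because negating a column flips both $g(\cdot,j,+1)$ and $g(\cdot,j,-1)$ together. I expect the resolution to be one of: (a) a more clever involution that negates columns \emph{and} permutes/relabels rows, exploiting the identity $g(X,\bm{y},j,-1) = g(-X,-\bm{y},j,+1)$ together with the $S_{c_1}$ bijection to get $|S_{c_2}| = |S_{c_3}|$ for free and then a separate argument (possibly a double-counting or generating-function identity over the joint distribution of the pair $(\sum_j g(\cdot,+1), \sum_j g(\cdot,-1))$) showing these equal $|\mathcal{U}|$; or (b) the authors restrict to the regime where the two class-counts are independent-ish and a symmetry argument on the $2^m \times 2^m$-type sum closes it. I would first try to push route (a): prove $|S_{c_2}|=|S_{c_3}|$ via $(X,\bm{y})\mapsto(-X,-\bm{y})$ (which swaps the strict inequalities $<m/2$ and $>m/2$ between the two classes, hence swaps $S_{c_2}$ and $S_{c_3}$), reducing the problem to showing $|\mathcal{U}| = |S_{c_2}|$, and then look for a column-subset-negation indexed by $\bm{y}$ that realizes it, checking invertibility carefully — that invertibility check is the crux.
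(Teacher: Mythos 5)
Your treatment of $S_{c_1}$ (negate all of $X$, an involution sending both class-sums from $>m/2$ to $<m/2$) matches the paper and is fine, and your reduction $|S_{c_2}|=|S_{c_3}|$ via the involution $(X,\bm{y})\mapsto(-X,-\bm{y})$ is also correct. But the heart of the lemma is the remaining equality $|\mathcal{U}|=|S_{c_2}|$, and you explicitly leave it open after correctly diagnosing why every column-based operation fails (negating a column toggles $g(\cdot,j,+1)$ and $g(\cdot,j,-1)$ in lockstep). That is a genuine gap: without that bijection the conclusion $|S_c|=3|\mathcal{U}|$, which is what the surrounding probability calculation actually needs, is not established.

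The missing idea — which you brush against when you mention flipping ``restricted to certain rows'' and then abandon — is to negate \emph{rows} selected by the label, not columns. By definition, $g(X,\bm{y},j,c)$ compares $\sum_i \mathbf{1}_{X[i,j]=c\,\&\,\bm{y}[i]=c}$ with $\sum_i \mathbf{1}_{X[i,j]=-c\,\&\,\bm{y}[i]=c}$, so it depends only on the entries of $X$ in rows $i$ with $\bm{y}[i]=c$. Hence the map that negates $X$ exactly on the rows $\{i:\bm{y}[i]=-1\}$ (keeping $\bm{y}$ fixed; abstentions $0$ are fixed points of negation) swaps the two counts for every $g(\cdot,j,-1)$, toggling all of them, while leaving every $g(\cdot,j,+1)$ untouched. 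This is an involution on the sample space carrying $\mathcal{U}$ onto $S_{c_2}$, giving $|\mathcal{U}|=|S_{c_2}|$; the analogous flip on rows $\{i:\bm{y}[i]=+1\}$ gives $S_{c_3}$, and composing both recovers your $S_{c_1}$ map. This is exactly the paper's argument. One caveat applying equally to your $S_{c_1}$ step and to the paper: if some LF has exactly tied counts within a class, $g$ is $0$ before and after the flip rather than toggling, so these involutions are only bijections up to such tie events; the paper absorbs this into its ``approximately $0.232$'' framing, but a fully rigorous version would have to exclude or bound those configurations.
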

\begin{proof}
For each element $(X, \bm{y})$ in $S_{c_1}$, $\sum_{j=0}^{m-1}g(X, \bm{y}, j,-1)< \frac{m}{2}$ and $\sum_{j=0}^{m-1}g(X, \bm{y}, j,1)< \frac{m}{2}$. We can flip $X[\bm{y}=1,:]$ to be $-X[\bm{y}=1,:]$ and flip $X[\bm{y}=-1,:]$ to be $-X[\bm{y}=-1,:]$. After flipping, we obtain pair $(X', \bm{y})$, and apparently $(X', \bm{y}) \in \mathcal{U}$. This means for each element in $S_{c_1}$ there is a corresponding element in $\mathcal{U}$, so we have $|S_{c_1}|\leq |\mathcal{U}|$.
Similarly, for each element in $\mathcal{U}$, we can do flipping to get an element in $S_{c_1}$, so we also have $|\mathcal{U}|\leq |S_{c_1}|$. Therefore, $|\mathcal{U}|=|S_{c_1}|$.
Similarly, one can show that $|\mathcal{U}|=|S_{c_2}|$ and $|\mathcal{U}|=|S_{c_3}|$. 
\end{proof}

By Lemma~\ref{lemma}, $|S_c|=|S_{c_1}|+|S_{c_2}|+|S_{c_3}|=3|\mathcal{U}|$. When $m$ is odd, apparently, $|S_e|=0$. Therefore, the probability of a randomly generated pair being valid is:
\begin{equation}
p((X,\bm{y})\in \mathcal{U}|m) = \frac{|\mathcal{U}|}{|S|}=\frac{|\mathcal{U}|}{|\mathcal{U}|+|S_e|+|S_c|}=\frac{1}{4}
\end{equation}

Next, we consider when $m$ is even. To simplify our analysis, approximately, $p(g(X,\bm{y},j,-1))= \frac{1}{2}$ and $p(g(X,\bm{y},j,1))=\frac{1}{2}$. This is because the probability that the number of correct elements exactly equal to the number of incorrect elements for each class is extremely small due to $n$ being relatively large. 
Therefore, we have:
\begin{equation}
p((X,\bm{y})\in S_e|m) =  \frac{{m \choose m/2}}{2^m}
\end{equation}
Therefore:
\begin{equation}
p((X,\bm{y})\in \mathcal{U}|m) =  (1-\frac{{m \choose m/2}}{2^m})\frac{1}{4}
\end{equation}

Since $m$ is uniformly sampled from $[L_m, H_m]=[2, 60]$ (which we explain in Appendix~\ref{app:impl_lola}), we have:
\begin{equation}
\begin{split}
p((X,\bm{y})\in \mathcal{U}) =& \sum_{m, m\%2=1, L_m \leq m \leq H_m}\frac{1}{H_m-L_m}\frac{1}{4}+\sum_{m, m\%2=0, L_m \leq m \leq H_m}\frac{1}{H_m-L_m}(1-\frac{{m \choose m/2}}{2^m})\frac{1}{4}\\
=&\frac{1}{2}\times\frac{1}{4}+\sum_{m, m\%2=0, L_m \leq m \leq H_m}\frac{1}{H_m-L_m}(1-\frac{{m \choose m/2}}{2^m})\frac{1}{4}\\
\approx&0.232
\end{split}
\end{equation}
This means the probability of generating a valid pair in one trial is about 0.232.

\section{Discussions}
\subsection{The Proposed Architecture Satisfies the Three Properties}
\label{app:arch_three_property}
To see how the proposed architecture in Figure~\ref{fig:overall_arch} satisfies the three properties mentioned in the begining of Section~\ref{sec:model_arch}. 
First, GNN accepts arbitrary input size, so $X$ can be of any size; Second, GNN is permutation equivariant to the nodes, so the output embeddings of GNN are equivariant to the permutation of data points and LFs. After average pooling for each data point over all LFs (each SCC with dashed blue edges), the network is invariant to the permutation of LFs and is still equivariant to the permutation of data points.

\subsection{Crowdsourcing Methods for Weak Supervision}
\label{app:crowd_for_ws}
The two crowdsourcing methods have the worst performance in Table~\ref{tbl:label_model_performance}. 
The reason that crowdsourcing methods don't work well on weak supervision datasets has not been investigated or discussed in prior work, and we provide our conjecture. First, the label matrix in crowdsourcing tends to be extremely sparse as there can be many crowd workers while each crowd worker might annotate a few data points then quit~\citep{zheng2017truth}; In contrast, in weak supervision, each LF is applied to each data point. Second, since crowd workers are humans, the labels provided by the crowd workers tend to have higher accuracy; In contrast, a LF when applied on data unseen by the LF developer can predict very noisy labels. In other words, the existing crowdsourcing methods are designed to work in the sparse scenario with weak labels of higher accuracy, so that they don't work well in the weak supervision setting with a denser and noisier label matrix. 

\section{Implementation Details of HLM}
\label{app:impl_lola}
\noindent\textbf{Data Generation.} 
When generating each pair $(X, \bm{y})$, we first randomly generate $n$ and $m$, the number of rows/columns of matrix $X$. 
Note $n$ is the number of data points and $m$ is the number of LFs. 
As we mentioned, we first sample $n$ and $m$ uniformly from $[L_n, H_n]$ and $[L_m, H_m]$ respectively. 
We set $[L_n, H_n]=[100, 2000]$ where $L_n=100$ is because typically there are at least hundreds of data points otherwise it is not necessary to write LFs as one can just manually label all data points and we set $H_n=2000$ due to memory limit during model training. 
We set $[L_m, H_m]=[2, 60]$ where $L_m=2$ is because when there is only one LF there is no need to aggregate and we set $H_m=60$ due to memory limit during model training; 
We highlight our trained model generalizes well to number of LFs and number of data points (see Table~\ref{tbl:datasets}) that are not in the region $[L_m, H_m]$ and $[L_n, H_n]$ as we have shown in experiments. Once we have $n$ and $m$, 
we invoke the method mentioned in Section~\ref{sec:train_data_gen} to generate $(X, \bm{y})$.

Since our data is synthetically generated, there is no need to generate a fixed training set. Our training data is generated on the fly, \ie during training when the data loader fetches the next pair of $(X, \bm{y})$, a new pair is immediately generated and returned.  

\noindent\textbf{Model Architecture.} We implement our model architecture in Pytorch~\citep{paszke2019pytorch}. We use $K=4$ layers of GNN. The embedding dimension of GNN is $32$, \ie each node in the graph is encoded with a $32$ dimensional embedding. The final MLP consists of three linear layers; the first two linear layers use Relu activation and the last linear layer uses Sigmoid activation.

\noindent\textbf{Model Training.} We use the Adam optimizer~\citep{kingma2014adam}. We set amsgrad to be true for better convergence~\citep{reddi2019convergence} and keep all other parameters as the default values provided by Pytorch (e.g. learning rate $lr=0.001$). We use a batch size of $50$, \ie each batch consists of $50$ pairs of generated $(X,\bm{y})$. 
We tested different batch sizes of $16$ and $250$ and observed no meaningful difference. We train our model until training loss converges (loss doesn't decrease in $10^4$ iterations), which takes about one day with $5\times 10^4$ iterations on a K80 GPU. Note one iteration means one gradient update/one batch, and we don't have the notion of epoch as training data is generated on-the-fly for each batch.

\noindent\textbf{Validation.} 
We also need to prevent our model from overfitting the training set.  
We highlight that, different from typical ML settings where one gets access to a validation set that is similar to the test set, in our setting we have no validation set that is similar to the test set. Again, when training our model, the real test datasets are unseen and we only have access to synthetic data. 
Our intuition is that when the model overfits the sythetically generated training set $\mathcal{D}$, its performance will be poor on data that is different from the training set, for example, on another sythetic dataset $\mathcal{D}'$ that is generated in a different way. We synthetically generate the validation set $\mathcal{D}'$ with size $|\mathcal{D}'|=100$ according to the generation method proposed in~\citep{wrench}; In this method, LFs are independent from each other conditioned on the ground-truth label.

\begin{figure}[htb!]
\vspace{-3mm}
  \centering
  \includegraphics[width=0.5\linewidth]{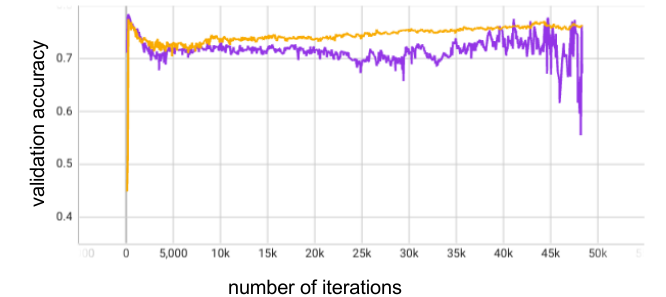}
  \caption{Accuracy on the synthetic validation set $\mathcal{D}'$ vs number of training iterations. The purple line and yellow line are two different runs. The yellow run is selected as it is more stable with a higher averaged validation accuracy.}
  \label{fig:valid_acc}
\end{figure}

We note that the way we use the validation set is also different from a typical setting. 
We train the model until training loss converges (this typically requires about $5\times 10^4$ iterations), and repeat 10 runs (\ie train our model 10 times from scratch). We then select the run with the highest averaged validation accuracy over all iterations (as validation accuracy might fluctuate over iterations); We use the learned model at the final iteration of the selected run in our experiments. We provide our reasoning of doing this: (1) We do not use the validation set to do early stopping (\ie to select the best iteration in a run). In a typical ML setting, the validation set is used to select the best epoch/iteration. This is possible because in a typical ML setting the validation set is similar to the test set and the validation set provides very strong signal towards which iteration is a good iteration for the test set. In our case, the validation set $\mathcal{D}'$ can be very different from the test set, thus the selected iteration based on $\mathcal{D}'$ might not be a good iteration for the test set. (2) We use the validation set to select the best run. We observed that at different runs, the curve of validation accuracy vs number of iteration can be different (e.g. the two runs in Figure~\ref{fig:valid_acc}), so the test accuracy of the model in different runs can be different. We would like to select the best run using the validation set $\mathcal{D}'$. 
Intuitively, one run with better validation accuracy on average over all iterations is stably better  (e.g. the yellow run in Figure~\ref{fig:valid_acc}), so we select the run with an best averaged validation accuracy over iterations.
As an example, for the two runs in Figure~\ref{fig:valid_acc}, although the highest validation accuracy of purple run can be higher than that of the yellow run, the yellow run has a higher averaged validation accuracy over iterations and is much more stable, so we select the yellow run.  We also observed this run to have a less degree of fluctuation in validation accuracy, as shown in Figure~\ref{fig:valid_acc}. This suggests the model converges at a flat minima, which is known to generalize better~\citep{li2018visualizing, keskar2016large, izmailov2018averaging}.

One natural question is that why it is possible to select the best run but it is not possible to select the best iteration. The reason is that selecting the best run out from 10 runs require much less information than selecting the best iteration out from $5\times 10^4$ iterations. Since the validation set $\mathcal{D}'$ can be very different from the test set, the information provided by $\mathcal{D}'$ is very limited.

An interesting phenomenon in the validation accuracy curve in Figure~\ref{fig:valid_acc} for the yellow run is that validation accuracy first increases then decreases and finally increases. A similar trend was observed in prior work (of a different task) that also trains a model on synthetic data and validate on a different data distribution~\citep{10.14778/3489496.3489508}. 
We believe this is a double descent phenomenon~\citep{nakkiran2021deep} induced by the distributional difference between the training and validation sets. 

\section{Additional Experiment Results}
\label{app:add_exp}

\subsection{Running time}
\label{ssec:running_time_all}
\rebuttal{
We report the running time for all methods in Table~\ref{tbl:label_model_running_time_cpu_gpu}. For MV, DP, FS, DS, EBCC, and CLL, the running is on CPU as these methods do not support GPU. For MeTaL, NPLM and HLM, we report the running time on CPU and GPU.}
\begin{table}[htb!]
\caption{Running time (seconds) of label aggregation on all datasets with CPU and GPU.}
\centering
\setlength{\tabcolsep}{2pt}
\resizebox{\columnwidth}{!}{
\begin{tabular}{@{}llllllllllllllll@{}}
\toprule
Dataset & Census & IMDB & Yelp  & Youtube & SMS  & Spouse & CDR  & Commercial & Tennis & Basketball & AGNews & TREC  & SemEval & ChemProt & \textbf{AVG.} \\ \midrule
MV      & $<$0.1   & $<$0.1 & $<$0.1  & $<$0.1    & $<$0.1 & $<$0.1   & $<$0.1 & $<$0.1       & $<$0.1   & $<$0.1       & $<$0.1   & $<$0.1  & $<$0.1    & $<$0.1     & $<$0.1         \\ \midrule
DP      & 147.8  & 18.8 & 40.5  & 2.5     & 14.4 & 8.4    & 29.5 & 8.5        & 10.0   & 14.9       & 225.0  & 100.8 & 190.2   & 213.0    & 73.2         \\ \midrule
FS      & 21.1   & 1.7  & 3.7   & 0.2     & 3.2  & 0.8    & 3.7  & 0.6        & 0.6    & 14.9       & 22.1   & 16.3  & 69.0    & 26.4     & 12.2         \\ \midrule
MeTaL-GPU   & 0.5    & 0.3  & 0.4   & 0.4     & 0.4  & 0.3    & 0.4  & 0.4        & 0.4    & 0.4        & 0.5    & 3.6   & 4.6     & 3.6      & 1.2           \\
\midrule
MeTaL-CPU   & 1.1   & 0.3  & 0.4   & 0.4     & 0.9  & 0.3    & 0.9  & 0.4        & 0.4    & 0.4        & 0.5    & 16.7   & 18.1     & 16.5      & 4.1          \\
\midrule
NPLM-GPU    & 15.7   & 4.0  & 5.7   & 0.4     & 2.2  & 1.8    & 6.3  & 11.2       & 1.5    & 3.4        & 27.9   & 5.4   & 3.4     & 12.1     & 7.2           \\ \midrule
NPLM-CPU    & 156.7   & 56.0  & 5.7   & 0.4     & 19.8  & 1.8    & 6.3  & 11.2       & 1.5    & 3.4        & 29.9   & 49.9   & 25.9     & 132.6     & 35.8           \\ \midrule
DS &2.4 &79.8 &116.1&0.2&3.6&0.9&29.7&267.7&4.6& 2.1&16.3&78.3&36.6&255.9 & 63.9 \\\midrule
EBCC &3.9 &5.1 & 52.5 & 2.2 & 2.8 & 2.3 & 5.8 & 3.0 &2.5 &6.0 & 18.0 & 9.0 & 9.8 & 84.8 & 14.8 \\\midrule
CLL & 33.7 &2.9&6.6&0.5&3.8&1.4&6.0&7.4&1.1&2.0&28.5&12.4&20.5&21.3&10.6\\
\midrule
HLM-GPU    & 0.1    & 0.1  & 0.1   & 0.1     & 0.1  & 0.2    & 0.2  & 0.3        & 0.2    & 0.3        & 0.4    & 0.2   & 0.3     & 0.2      & 0.2           \\
\midrule
HLM-CPU    & 0.9    & 0.2  & 0.3   & 0.2     & 0.2  & 0.7    & 0.7  & 1.1        & 0.3    & 0.9        & 4.6    & 4.4   & 1.9     & 7.3      & 1.7          \\\bottomrule
\end{tabular}
}
\label{tbl:label_model_running_time_cpu_gpu}
\end{table}

\subsection{End Model Performance}
\label{app:end_model}
We use the generated labels of each method to train an end model for each dataset.
We consider the \revision{three best performing baselines MV, MeTaL and CLL}. 
We use the test split provided by the benchmark~\citep{wrench} for each dataset because some datasets only have ground-truth labels for data points in the provided test split. 
We then randomly split the remaining data points to be a training set and a validation set with a 3:1 ratio.
The labels in the training set and validation set are generated labels by each label model, while the labels in the test set are ground-truth labels for evaluation.  
Following prior work~\citep{ratner2016data, wrench}, the probabilistic labels instead of the hard labels are used to train the end model when possible. 
We adopt the end models used in  (and their implementations provided by) the benchmark~\citep{wrench, wrench_github}, \ie a pretrained BERT model~\citep{devlin2018bert} for textual datasets and a multi-layer perception (MLP) for datasets with numeric features. 
We report the results on test set in Table~\ref{tbl:end_model_performance}. Again, to maintain the table to be readable, we only show the error bars for the averaged scores. 
\begin{table}[h]
\caption{Performance of end model trained with labels generated by each method.}
\centering
\setlength{\tabcolsep}{2pt}
\resizebox{\columnwidth}{!}{
\begin{tabular}{@{}llllllllllllllll@{}}
\toprule
Dataset & Census         & IMDB           & Yelp           & Youtube        & SMS            & Spouse         & CDR            & Commercial     & Tennis         & Basketball     & AGNews         & TREC           & SemEval        & ChemProt       & \textbf{AVG.}   \\ \midrule
End model & MLP & BERT & BERT & BERT & BERT &BERT & BERT & MLP & MLP & MLP & BERT & BERT & BERT & BERT& \\ \midrule
MV      & 31.7          & \textbf{74.7} & 74.2 & 90.9          & 83.5          & 51.6 & 62.9          & \textbf{90.1} & 83.5          & 13.4          & 81.9          & 63.9          & 76.8 & \textbf{54.6}          & 66.7$\pm$0.8          \\ \midrule
MeTaL   & 11.6          & 74.4 & 70.5 & 87.1          & \textbf{87.3}          & 51.0          & 63.5          & 88.2          & \textbf{83.5}          & 14.5 & \textbf{82.0} & 63.7          & \textbf{83.1} & 52.5          & 65.2$\pm$0.2          \\ \midrule
\revision{CLL} & \revision{51.5} & \revision{73.6} & \revision{72.8} & \revision{81.8} & \revision{83.6} & \revision{52.0} &\revision{59.8} & \revision{89.6} & \revision{\textbf{83.5}} & \revision{\textbf{19.3}} & \revision{81.7} & \revision{\textbf{68.4}} & \revision{82.8} & \revision{53.0}  & \revision{68.1$\pm$0.3}
\\ \midrule
HLM    & \textbf{56.3} & \textbf{74.7} & \textbf{75.7} & \textbf{93.0} & 82.4 & \textbf{52.3} & \textbf{64.1} & 87.1          & \textbf{83.5}          & 17.0          & 80.8          & \textbf{68.4} & 82.8 & 53.1        & \textbf{69.4}$\pm$0.3 \\ \bottomrule
\end{tabular}
}
\label{tbl:end_model_performance}
\end{table}

Our results align with those in the benchmark~\citep{wrench} where the end model trained on labels generated by MeTaL is slightly worse than that by MV. 
Overall, HLM outperforms the other three methods. 
On Yelp, Spouse, and SemEval, HLM tied with MV in label quality (see Table~\ref{tbl:label_model_performance}) but has better end model performance as HLM's probabilistic labels can be more informative. 
Note the scores of the end model can be higher than that of the generated labels (as also observed in the benchmark~\citep{wrench} and prior work~\citep{ratner2017snorkel}) because the end model incorporates additional information from the raw data.

\section{Implementation Details of Experiments}
\label{app:impl_exps}
\noindent\textbf{Hardware.} All of our experiments were performed on a machine with a 2.20GHz Intel Xeon(R) Gold 5120 CPU, a K80 GPU and with 96GB 2666MHz RAM.

\noindent\textbf{Datasets.} We use the datasets prepared by the wrench benchmark on Github~\citep{wrench_github, wrench}. All the datasets and LFs are publicly released by previous work~\citep{wrench}. All datasets do not contain any personally identifiable
information~\citep{wrench}.

 Originally, each dataset include three files "train.json", "valid.json" and "test.json". Following the suggestion in a reported issue of the wrench benchmark~\citep{wrench_issue}, we combine all three files to get a single matrix $X$ and single ground-truth label vector $\bm{y}$ for the experiments on label aggregation. We then split the datasets using the original split for the experiment on end model (Appendix~\ref{app:end_model}). 
The information of the LFs as well as the raw data for each dataset can be found in the wrench benchmark project on Github~\citep{wrench_github}. 

\noindent\textbf{Baselines.}
For each baseline, we use existing open-source implementations. The implementations of DS, DP, FS, MeTaL, EBCC, NPLM, and ACML-CC are from~\citep{ds_github}, ~\citep{dp_github}, ~\citep{fs_github},  ~\citep{snorkel-github}, ~\citep{EBCC}, ~\citep{BatsResearch2022May}, and~\citep{amcl_cc} respectively. For baselines that require class weights as priors, we report the best results from using uniform weights and using the weights estimated by majority vote.

\textbf{Setup in Semi-supervised Label Aggregation.} When sampling $N_{\text{gt}}$ data points as the data points with known labels, we make sure that each class has at least two data points. 
For random forest, we use the scikit-learn implementation~\citep{rfscikitlearn}. 
When training the random forest classifier, we use five fold cross validation to perform grid search for the "max\_depth" parameter in range [2, 4, 8, 16, 32, None] and the "min\_samples\_split" parameter in range [2, 5]. The AMCL-CC method does not support abstention, to make it work we fill in the abstentions with labels provided by majority vote; AMCL-CC requires a lot of memory on some datasets and involves solving a constrained linear programming problem which may not have a solution. When AMCL-CC fails due to out-of-memory error or no-solution-found error, we use the results from random forest. 
We repeat five runs and report results with error bars in Figure~\ref{fig:semisupervised}. 
 
\textbf{Setup in Ablation Study.} \rebuttal{For model architecture, we test two baselines.  The first one is based on MLP. The input is a flattened vector of a fixed size matrix $2000\times 50$ (padded with zero if the input matrix is smaller) and the network has 10 linear layers. The second one is based on DeepSet~\citep{zaheer2017deep} where each row of $X$ is treated as a set. We use an open source implementation~\citep{deepset_github}. We replace each of our GNN layer with a DeepSet layer. }

\textbf{Setup in End Model Experiment.} When training the end model, the training set and validation set both use generated labels by each method and the test set uses ground-truth labels. For the two end model MLP and BERT, we use the implementation provided by the benchmark~\citep{wrench_github, wrench}. 
We use grid search to tune hyper-parameters for each end model based on validation set performance. We use the same search space as the benchmark~\citep{wrench}, as summarized in Table~\ref{tbl:parameter_space}. We repeat five runs and report the scores averaged over runs in Table~\ref{tbl:end_model_performance}.

\begin{table}[htb!]
\caption{Hyper-parameters and search space for the end models.}
\begin{tabular}{@{}llll@{}}
\toprule
End Model             & Hyper-parameter & Description               & Range                    \\ \midrule
\multirow{5}{*}{MLP}  & batch\_size      & batch size                & 32,128,512               \\
                      & lr              & learning rate             & 1e-5,1e-4,1e-3,1e-2,1e-1 \\
                      & weight\_decay    & weight decay              & 1e-5,1e-4,1e-3,1e-2,1e-1 \\
                      & ffn\_num\_layer   & number of MLP layers      & 2                        \\
                      & ffn\_hidden\_size & hidden size of MLP layers & 100                      \\ \midrule
\multirow{2}{*}{BERT} & batch\_size      & batch size                & 16,32                    \\
                      & lr              & learning rate             & 2e-5,3e-5,5e-5           \\ \midrule
\end{tabular}
\label{tbl:parameter_space}
\end{table}

\end{document}